\newcommand{\cmark}{\ding{51}}%
\newcommand{\xmark}{\ding{55}}%
\newtheorem{theorem}{Theorem}
\newtheorem{definition}{Definition}
\title{Federated Transformer: Multi-Party Vertical Federated Learning on Practical Fuzzily Linked Data}
\author{%
  Zhaomin Wu, Junyi Hou, Yiqun Diao, Bingsheng He \\
  National University of Singapore, Singapore \\
  \texttt{zhaomin@nus.edu.sg}, \texttt{\{junyi.h,yiqun,hebs\}@comp.nus.edu.sg} \\
}
\begin{document}

\maketitle

\begin{abstract}
Federated Learning (FL) is an evolving paradigm that enables multiple parties to collaboratively train models without sharing raw data. Among its variants, Vertical Federated Learning (VFL) is particularly relevant in real-world, cross-organizational collaborations, where distinct features of a shared instance group are contributed by different parties. In these scenarios, parties are often linked using fuzzy identifiers, leading to a common practice termed as \textit{multi-party fuzzy VFL}. Existing models generally address either multi-party VFL or fuzzy VFL between two parties. Extending these models to practical multi-party fuzzy VFL typically results in significant performance degradation and increased costs for maintaining privacy. To overcome these limitations, we introduce the \textit{Federated Transformer (FeT)}, a novel framework that supports multi-party VFL with fuzzy identifiers. FeT innovatively encodes these identifiers into data representations and employs a transformer architecture distributed across different parties, incorporating three new techniques to enhance performance. Furthermore, we have developed a multi-party privacy framework for VFL that integrates differential privacy with secure multi-party computation, effectively protecting local representations while minimizing associated utility costs. Our experiments demonstrate that the FeT surpasses the baseline models by up to 46\% in terms of accuracy when scaled to 50 parties. Additionally, in two-party fuzzy VFL settings, FeT also shows improved performance and privacy over cutting-edge VFL modelstings.
\end{abstract}

\doparttoc %
\faketableofcontents %

\section{Introduction}\label{sec:introduction}

Federated Learning (FL) is a learning paradigm that enables multiple parties to collaboratively train a model while preserving the privacy of their local data~\cite{li2021survey}. Among its various forms, Vertical Federated Learning (VFL)~\cite{yang2023survey} is particularly prevalent form in real-world applications as highlighted in a recent technical report~\cite{webank2023report}. In VFL, participants possess different features of the same set of instances, where common features, such as names or addresses, serve as \textit{identifiers} (a.k.a. \textit{keys}) to link datasets across these parties.

Real-world applications often necessitate \textit{multi-party fuzzy VFL}, characterized by two key attributes. First, it supports collaboration among \textit{multiple parties}, commonly observed in collaborations across hospitals~\cite{mugunthan2021multi}, sensors~\cite{yan2022multi}, and financial institutions~\cite{shi2022mvfls}. Second, it accommodates scenarios where these parties are linked using fuzzy identifiers, such as addresses. Such scenarios are prevalent in applications, as illustrated in an analysis~\cite{wu2022coupled} of the German Record Linkage Center~\cite{eberle2016record}. For instance, multiple vehicle rental companies that are fuzzily linked by source and destination addresses in the same city can collaborate to predict travel times.

To illustrate the significance of multi-party fuzzy VFL, consider the application of travel cost prediction in a city through collaboration among taxi, car, bike, and bus companies, as shown in Figure~\ref{fig:real-fuzzy-vfl}. Since personal travel information is private and cannot be shared, VFL is essential. Additionally, route identifiers—starting and ending GPS locations—can only be linked using fuzzy methods. However, linking closely related source and destination points with multi-party fuzzy VFL can significantly enhance prediction accuracy.

\begin{figure}[t!]
    \centering
    \includegraphics[width=0.95\textwidth]{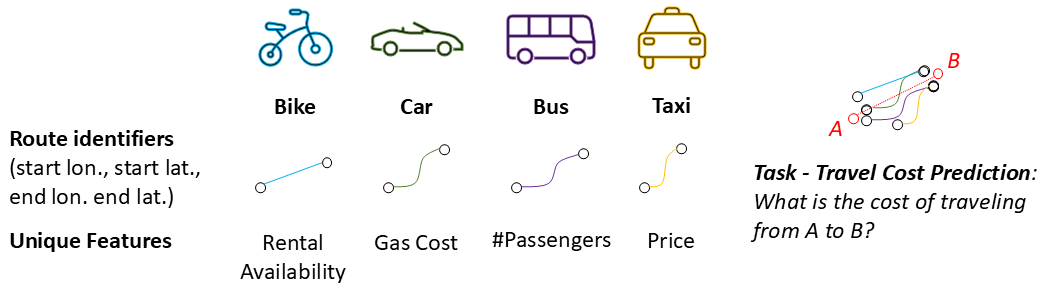}
    \caption{Real application of multi-party fuzzy VFL: travel cost prediction in a city}
    \label{fig:real-fuzzy-vfl}
\end{figure}

Existing VFL approaches generally address either the multi-party aspect or the fuzzy identifier issue. Several methods \cite{mugunthan2021multi,yan2022multi,shi2022mvfls,huang2023efmvfl,li2023vfl} facilitate multi-party VFL using Private Set Intersection (PSI) \cite{dong2013private} to link datasets. These methods often presume the existence of precise, universal keys, which are not feasible in common VFL scenarios involving fuzzy identifiers. Conversely, other studies \cite{hardy2017private,nock2021impact,wu2022coupled} propose two-party fuzzy VFL models that utilize cross-party key similarities for training. However, when extended to multi-party fuzzy VFL, these similarity-based approaches encounter significant challenges in performance and privacy. While some methods achieve reasonable performance, they often compromise privacy or incur prohibitive costs.

Despite the potential of multi-party fuzzy VFL, several significant challenges must be addressed for effective implementation. First, as the number of parties with \textit{fuzzy identifiers} increases, maintaining performance becomes increasingly challenging. The addition of parties leads to a quadratic growth in the number of key pairs, an increase in incorrect linkages between fuzzy identifiers, and larger model sizes. These factors collectively heighten the risk of overfitting and adversely impact model performance. Second, the rising costs of preserving privacy intensify as more parties with \textit{correlated data} participate, leading to either significant computational costs \cite{mugunthan2021multi,yan2022multi,shi2022mvfls} or accuracy loss \cite{wang2020hybrid}. Third, in a collaboration of multiple parties, a communication bottleneck arises for the party with labels, termed the \textit{primary party}. This party must communicate with all other parties without labels, termed \textit{secondary parties}, in each training round, placing substantial demands on the primary party's bandwidth. These challenges significantly hinder the practical deployment of VFL.

To address these issues, we introduce the \textit{Federated Transformer (FeT)} to enhance performance and reduce privacy costs in multi-party fuzzy VFL. First, to tackle performance issues, we encode key similarities into data representations aligned by \textit{positional encoding averaging}, which eliminates the need for quadratic calculations of key pairs. Additionally, we have designed a trainable \textit{dynamic masking} module that automatically filters out incorrectly linked pairs, enhancing model accuracy by up to 13\% in 50-party fuzzy VFL on the \texttt{MNIST} dataset. Second, to mitigate the escalating costs of privacy protection, we introduce SplitAvg, a hybrid approach that merges encryption-based and noise-based methods, maintaining a consistent noise level regardless of the number of participating parties. Third, to alleviate communication overhead on the primary party, we implement a \textit{party dropout} strategy, which randomly excludes certain secondary parties during each training round. This effectively reduces communication costs by approximately 80\% and improves model generalization. Our codes are available on GitHub\footnote{\url{https://github.com/Xtra-Computing/FeT}}. In summary, our contributions are as follows:
\begin{itemize}
    \item We design \textit{Federated Transformer (FeT)}, a novel model achieving promising performance under multi-party fuzzy VFL.
    \item We introduce \textit{SplitAvg} to enhance the privacy of FeT by protecting local representations in multi-party fuzzy VFL, with a theoretical proof of its differential privacy.
    \item Experimental results demonstrate that FeT significantly outperforms baseline models by up to 46\% in terms of accuracy in 50-parties VFL. Moreover, while providing enhanced privacy, FeT consistently surpasses state-of-the-art models even in traditional two-party fuzzy VFL scenarios.
\end{itemize}

\section{Preliminaries}\label{sec:preliminary}

In this section, we provide the foundational concepts necessary for understanding our approach to differential privacy. Differential Privacy (DP) offers a rigorous mathematical framework for preserving individual privacy. It quantifies privacy in terms of the probability of producing the same output from two similar datasets that differ by exactly one record.

\begin{definition}
    Consider a randomized function $\mathcal{M}:\mathbb{R}^d\rightarrow \mathcal{O}$ and two neighboring databases $D_0,D_1\sim \mathbb{R}^d$ that differ by exactly one record. For every possible output set $O\subseteq\mathcal{O}$, $\mathcal{M}$ satisfies $(\varepsilon,\delta)$-differential privacy if
    \[
    \Pr[{\mathcal{M}(D_0)\in O}] \le e^\varepsilon \Pr[{\mathcal{M}(D_1)\in O}] + \delta,
    \]
    where $\varepsilon\ge 0$ and $\delta\ge 0$.
\end{definition}

A single query that adheres to differential privacy is termed a \textit{mechanism}. For example, the Gaussian mechanism~\cite{balle2018improving} is commonly used to achieve DP by adding Gaussian noise to the output of the function.

\begin{theorem}[Gaussian Mechanism~\cite{balle2018improving}]
For a function \(f:\mathbb{X}\rightarrow\mathbb{R}^d\) characterized by a global \(L_2\) sensitivity \(\Delta_2\), which signifies that the maximum difference in the \(L_2\)-norm of the outputs of \(f\) on any two neighboring databases is \(\Delta_2\), and for any \(\varepsilon \ge 0\) and \(\delta \in [0,1]\), the Analytic Gaussian Mechanism is defined as \(\mathcal{M}(x) = f(x) + Z\), where \(Z \sim \mathcal{N}(0, \sigma^2\mathbf{I})\). This mechanism satisfies \((\varepsilon, \delta)\)-differential privacy if
    \(
        \Phi\left(\frac{\Delta_2}{2\sigma}-\frac{\varepsilon\sigma}{\Delta_2}\right) - e^\varepsilon\Phi\left(-\frac{\Delta_2}{2\sigma}-\frac{\varepsilon\sigma}{\Delta_2}\right) \le \delta,
    \)
    where $\Phi(t)=\frac{1}{\sqrt{2\pi}}\int_{-\infty}^{t}e^{-x^2/2}dx$ is the cumulative distribution function (CDF) of the standard univariate Gaussian distribution.
\end{theorem}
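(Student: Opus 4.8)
The plan is to analyze the \emph{privacy loss random variable} and exploit the rotational symmetry of the isotropic Gaussian to collapse the $d$-dimensional problem to a scalar one. First I would recall the tight characterization of approximate differential privacy: for a fixed neighboring pair $D_0,D_1$, the quantity $\Pr[\mathcal{M}(D_0)\in O]-e^\varepsilon\Pr[\mathcal{M}(D_1)\in O]$ is maximized over output sets $O$ by the likelihood-ratio set $O^\star=\{o:p_0(o)>e^\varepsilon p_1(o)\}$, where $p_i$ denotes the density of $\mathcal{M}(D_i)$. Writing the privacy loss $L(o)=\log\bigl(p_0(o)/p_1(o)\bigr)$, this shows that the pair satisfies $(\varepsilon,\delta)$-DP iff $\delta\ge\Pr_{o\sim\mathcal{M}(D_0)}[L(o)>\varepsilon]-e^\varepsilon\Pr_{o\sim\mathcal{M}(D_1)}[L(o)>\varepsilon]$, and $\mathcal{M}$ is $(\varepsilon,\delta)$-DP iff this holds for every neighboring pair.

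Next comes the reduction to one dimension. With means $\mu_0=f(D_0)$, $\mu_1=f(D_1)$ and shift $v=\mu_0-\mu_1$, a short computation gives $L(o)=\sigma^{-2}\langle o,v\rangle+\text{const}$, so $L$ depends on the output only through the projection $\langle o,v\rangle$, which is a univariate Gaussian. By rotating coordinates I may therefore assume $d=1$, with two Gaussians of common variance $\sigma^2$ separated by $\eta:=\lVert v\rVert_2\le\Delta_2$.

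Then I would compute the law of $L$ explicitly. For scalar Gaussians with shift $\eta$ and variance $\sigma^2$ the privacy loss is itself Gaussian: under $\mathcal{M}(D_0)$ one has $L\sim\mathcal{N}\bigl(\tfrac{\eta^2}{2\sigma^2},\tfrac{\eta^2}{\sigma^2}\bigr)$, and under $\mathcal{M}(D_1)$ one has $L\sim\mathcal{N}\bigl(-\tfrac{\eta^2}{2\sigma^2},\tfrac{\eta^2}{\sigma^2}\bigr)$. Standardizing each tail yields $\Pr_{D_0}[L>\varepsilon]=\Phi(a)$ and $\Pr_{D_1}[L>\varepsilon]=\Phi(b)$ with $a=\tfrac{\eta}{2\sigma}-\tfrac{\varepsilon\sigma}{\eta}$ and $b=-\tfrac{\eta}{2\sigma}-\tfrac{\varepsilon\sigma}{\eta}$, so the characterization becomes $\delta\ge\delta(\eta):=\Phi(a)-e^\varepsilon\Phi(b)$. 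Finally I would show $\delta(\eta)$ is nondecreasing in $\eta$: differentiating, the density identity $\phi(a)=e^\varepsilon\phi(b)$ (which follows from $b^2-a^2=2\varepsilon$) makes the cross terms cancel and collapses the derivative to the manifestly positive quantity $\phi(a)/\sigma$. Hence the supremum over neighbors with $\eta\le\Delta_2$ is attained at $\eta=\Delta_2$, and substituting $\eta=\Delta_2$ gives exactly the stated inequality.

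I expect the monotonicity step to be the main obstacle: one must verify rigorously that $\delta(\eta)$ increases in $\eta$, so that the global sensitivity $\Delta_2$ is genuinely the worst case rather than merely a convenient normalization. The output-set characterization and the Gaussian-tail evaluation, by contrast, become routine once $L$ is identified as Gaussian.
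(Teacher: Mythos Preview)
Your argument is correct and is essentially the proof given in the cited reference~\cite{balle2018improving}. However, the paper you are comparing against does \emph{not} prove this theorem: it is stated in the Preliminaries as a known result, attributed to Balle and Wang, and used only as a building block. The only theorem the paper proves is Theorem~\ref{thm:fet-dp}, whose proof (in Appendix~\ref{sec:proof}) merely invokes the moments-accountant bound after checking a clipping-based sensitivity condition. So there is no ``paper's own proof'' of the Analytic Gaussian Mechanism to compare your proposal to; what you have written is a faithful reconstruction of the original Balle--Wang argument, including the privacy-loss characterization, the rotational reduction to one dimension, the explicit Gaussian law of the privacy loss, and the monotonicity of $\delta(\eta)$ in the shift $\eta$ (your derivative computation $\delta'(\eta)=\phi(a)/\sigma$ via the identity $\phi(a)=e^\varepsilon\phi(b)$ is exactly right).
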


When multiple queries are made on the same database, independent Gaussian noise is added to each query to maintain differential privacy. The privacy loss of the composition of Gaussian mechanisms is formulated in Theorem~\ref{thm:dpsgd}.

\begin{theorem}[Moments Accountant~\cite{abadi2016deep}]\label{thm:dpsgd}
There exist constants \( c_1 \) and \( c_2 \) so that given the sampling probability \( q = \frac{L}{N} \) and the number of steps \( T \), for any \( \varepsilon < c_1 q^2 T \), DPSGD~\cite{abadi2016deep} is \( (\varepsilon, \delta) \)-differentially private for any \( \delta > 0 \) if we choose
\( \sigma > c_2 \frac{q \sqrt{T \log(1/\delta)}}{\varepsilon} . \)
\end{theorem}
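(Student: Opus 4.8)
The plan is to follow the moments accountant technique of Abadi et al. The central object is the privacy loss random variable associated with a mechanism $\mathcal{M}$ on neighboring databases $D_0, D_1$, whose log moment generating function I would denote $\alpha_{\mathcal{M}}(\lambda) = \max_{\mathrm{aux}, D_0, D_1} \log \mathbb{E}_{o \sim \mathcal{M}(D_0)}\bigl[\exp(\lambda\, c(o))\bigr]$, where $c(o) = \log\bigl(\Pr[\mathcal{M}(D_0) = o] / \Pr[\mathcal{M}(D_1) = o]\bigr)$ is the privacy loss at outcome $o$. The proof rests on two structural facts about $\alpha_{\mathcal{M}}$: a composition property and a tail bound converting moment control into an $(\varepsilon,\delta)$ guarantee.

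First I would establish \emph{composability}: if $\mathcal{M}$ is the adaptive composition of mechanisms $\mathcal{M}_1, \dots, \mathcal{M}_T$, then $\alpha_{\mathcal{M}}(\lambda) \le \sum_{t=1}^T \alpha_{\mathcal{M}_t}(\lambda)$. This follows because the privacy loss of the composition is the sum of the per-step losses, and the expectation of the product factorizes once we condition on the auxiliary input feeding each step. Next I would prove the \emph{tail bound}: for any $\lambda > 0$, $\mathcal{M}$ is $(\varepsilon, \delta)$-DP with $\delta = \exp(\alpha_{\mathcal{M}}(\lambda) - \lambda \varepsilon)$, obtained by applying Markov's inequality to $\exp(\lambda \cdot \text{privacy loss})$ and bounding the probability that the loss exceeds $\varepsilon$.

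The main obstacle is bounding the log moment of a single DPSGD step, i.e.\ the \emph{sampled Gaussian mechanism}: a minibatch is formed by including each of the $N$ records independently with probability $q = L/N$, the gradient sum is clipped to sensitivity $\Delta_2$, and $\mathcal{N}(0, \sigma^2 \mathbf{I})$ noise is added. Here the output distribution on $D_0$ is effectively a mixture $(1-q)\,\nu_0 + q\,\nu_1$ of two Gaussians separated by the sensitivity, and I would bound $\mathbb{E}[\exp(\lambda c)]$ by a binomial expansion of the mixture, controlling each moment term with Gaussian integral estimates. The key lemma is that, for suitable ranges of $\lambda$ and small $q$, $\alpha(\lambda) \le \frac{q^2 \lambda(\lambda+1)}{(1-q)\sigma^2} + O\!\left(\frac{q^3 \lambda^3}{\sigma^3}\right)$, so the leading behavior is $O(q^2 \lambda^2 / \sigma^2)$. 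This step is where the constant $c_1$ and the constraint $\varepsilon < c_1 q^2 T$ enter, since the higher-order terms must remain dominated by the quadratic term.

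Finally I would combine the pieces: composability gives $\alpha_{\mathcal{M}}(\lambda) \le T \alpha(\lambda) \le c\, T q^2 \lambda^2 / \sigma^2$ for the full $T$-step mechanism. Plugging this into the tail bound and optimizing over $\lambda$—taking $\lambda$ proportional to $\sigma^2 \varepsilon / (T q^2)$—yields $\log(1/\delta) \le c'\, \varepsilon^2 \sigma^2 / (T q^2)$, which rearranges to the claimed condition $\sigma > c_2\, q \sqrt{T \log(1/\delta)} / \varepsilon$. I would check that the chosen $\lambda$ lies in the admissible range required by the single-step moment bound, which is exactly what the hypothesis $\varepsilon < c_1 q^2 T$ guarantees, closing the argument.
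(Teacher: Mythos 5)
Your proposal is a correct reconstruction of the standard moments-accountant argument, but note that the paper itself never proves this statement: it is quoted verbatim as a background result from Abadi et al.~\cite{abadi2016deep} in the Preliminaries, and is only \emph{used} (in Appendix~\ref{sec:proof}) to derive Theorem~\ref{thm:fet-dp}. Your three-part outline --- composability of the log moment generating function, the Markov-type tail bound converting moment control into $(\varepsilon,\delta)$-DP, and the $O(q^2\lambda^2/\sigma^2)$ moment bound for the subsampled Gaussian mechanism together with the admissibility check on $\lambda$ that produces the constraint $\varepsilon < c_1 q^2 T$ --- is essentially the same route taken in the original proof of Abadi et al., so there is no methodological divergence to report.
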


\section{Related Work}\label{sec:related-work}
\paragraph{Performance.} Traditional VFL approaches \cite{liu2022fedbcd,castiglia2022cvfl} are typically limited to two-party scenarios. In contrast, existing multi-party VFL methods \cite{feng2020multi,wu2022practical,mugunthan2021multi,yan2022multi,shi2022mvfls,huang2023efmvfl,li2023vfl} often rely on the assumption of precise identifiers that ensure perfect alignment across all parties. These methods generally employ the SplitNN framework \cite{vepakomma2018split}, where each party maintains a portion of the model, and the models are collaboratively trained on well-aligned data samples through the transfer of representations and gradients, commonly known as split learning. However, the requirement for perfect data alignment is impractical in many real-world scenarios~\cite{wu2022coupled,antoni2019past}, where identifiers are often imprecise.

To address this limitation, semi-supervised VFL \cite{kang2022fedcvt,liu2020asymmetrical,he2022hybrid,huang2023vertical} has emerged, attempting to improve model performance by leveraging unlinked records through semi-supervised or self-supervised learning. However, these methods still assume that datasets from each party can be precisely linked using exact identifiers, a premise that is often untenable in real-world settings \cite{wu2022coupled,antoni2019past}. Given that the quality of linkage significantly impacts VFL accuracy \cite{nock2021impact}, exploring effective linkage methods remains a pivotal issue in VFL.

On the other hand, FedSim \cite{wu2022coupled}, based on real linkage projects at the German Record Linkage Center (GRLC) \cite{antoni2019past}, acknowledges that the keys of each party are usually not precisely linkable and that records may have one-to-many relationships, leading to fuzzy linkage scenarios, as seen with keys like GPS addresses. FedSim enhances training performance by performing soft linkage and conducting training based on transmitted key similarities. Nonetheless, it faces limitations in scalability beyond two parties and introduces new privacy concerns by directly transferring similarities.

In summary, while existing studies face significant performance challenges when handling fuzzy keys in multi-party settings, our proposed FeT demonstrates a scalable design that addresses these challenges and shows promising performance improvements in both multi-party fuzzy VFL and two-party settings compared to FedSim.

\paragraph{Privacy.} The privacy concerns associated with VFL are multifaceted. First, the primary party may infer data representations from secondary parties \cite{luo2021feature}. Second, the secondary party may derive gradients from the primary party \cite{sun2022label,zou2022defending}. Third, external attackers could conduct membership inference attacks \cite{shokri2017membership} on the deployed model \cite{wu2022practical}. This paper primarily addresses the second concern: safeguarding representations, while acknowledging other concerns as open challenges.

To address the privacy of representations in VFL, various methods have been proposed, falling into two primary categories: encryption-based methods and noise-based methods. Encryption-based methods \cite{li2023vfl,fu2022blindfl,mugunthan2021multi,yan2022multi,shi2022mvfls,huang2023efmvfl,qiu2023vfedsec} utilize computationally intensive cryptographic techniques to encrypt intermediate results. However, these methods often incur significant communication overhead when scaled to multiple parties. Conversely, noise-based methods \cite{wang2020hybrid,vepakomma2020nopeek} protect data by perturbing \cite{wang2020hybrid} or manipulating \cite{vepakomma2020nopeek} local representations. These methods typically do not provide theoretical privacy guarantees or require substantial amounts of noise when scaling to multiple parties in VFL, which can degrade performance. Unlike existing studies that rely solely on either approach, this paper explores a combined strategy incorporating both encryption-based and noise-based methods, ensuring the model scales effectively to multiple parties without the need for excessive noise.

\section{Problem Statement}\label{sec:problem}
In this section, we formalize the concept of multi-party fuzzy VFL. We consider a supervised learning task where one party holding labels, termed the \textit{primary party} \(P\), collaborates with \(k\) parties that do not hold labels, referred to as the \textit{secondary parties}. The primary party \(P\) possesses \(n\) data records denoted as \( \mathbf{x}^P:= \{x_i\}_{i=1}^n \) along with corresponding labels \( \mathbf{y}:= \{y_i\}_{i=1}^n \). Each secondary party \( S_k \) maintains its own dataset \( \mathbf{x}^{S_k} \). All parties share common features, referred to as identifiers, expressed as $\mathbf{x}^i=[\mathbf{k}^i, \mathbf{d}^i]$, where $[\cdot]$ signifies concatenation. These identifiers $\mathbf{k}^i$ may exhibit inaccuracies and fuzziness, despite residing within the same range.

\[
\min_\theta \frac{1}{n} \sum_{i=1}^n \mathcal{L}(f(\theta; x_i^P, \mathbf{x}^{S_1}, \dots, \mathbf{x}^{S_k}); y_i) + \Omega(\theta)
\]

In this formulation, \( \mathcal{L}(\cdot) \) denotes the loss function, \( \theta \) represents the model parameters, and \( \Omega(\theta) \) refers to the regularization term. The symbol \( n \) indicates the number of samples in the primary party \(P\). 

\textbf{Threat Model.} This study focuses on defending feature reconstruction attacks~\cite{li2022ressfl,luo2021feature}, which target local representations shared with the primary party. FeT operates under the assumption that all participating parties are \textit{honest-but-curious}, meaning they adhere to the protocol but may attempt to infer additional information about other parties. Furthermore, we assume that the parties do not collude with one another. While other forms of attacks, such as label inference attacks~\cite{fu2022label} and backdoor attacks aimed at compromising labels and gradients, exist, they are considered orthogonal to the objectives of this study. These additional threats will be explored in our future research.

\section{Approach}\label{sec:approach}

In this section, we address the performance and communication challenges inherent in multi-party fuzzy VFL. To tackle these issues, we introduce a transformer-based architecture named the Federated Transformer (FeT). This model encodes keys into data representations, thereby reducing reliance on key similarities. To accurately exclude incorrectly linked data records, we propose a trainable dynamic masking module that generates masks based on keys. Furthermore, to combat overfitting caused by the large model and to alleviate communication bottlenecks faced by the primary party, we introduce a party dropout strategy that randomly invalidates some parties during training. Additionally, we identify a positional encoding misalignment issue across parties in the FeT and propose positional encoding averaging to ensure consistent alignment, thereby enhancing model performance.

\subsection{Model Structure}
The architecture of the FeT is illustrated in Figure~\ref{fig:fet-structure}. In FeT, each secondary party has an encoder, while the primary party has both an encoder and a subsequent decoder. The internal structure of both the encoder and decoder closely adheres to the conventional transformer model~\cite{vaswani2017attention}. We utilize multi-dimensional positional encoding~\cite{li2021learnable} to integrate key information into feature vectors. Outputs from the encoders at the secondary parties are aggregated and then fed into the decoder. Details regarding the privacy mechanisms employed during this aggregation phase are discussed in Section~\ref{sec:privacy}, while the details of the training process are elaborated in Section~\ref{sec:training}. We then elaborate on three techniques designed to improve performance and reduce communication costs.

\begin{figure*}[t!]
    \centering
    \vspace{-5pt}
    \includegraphics[width=0.95\linewidth]{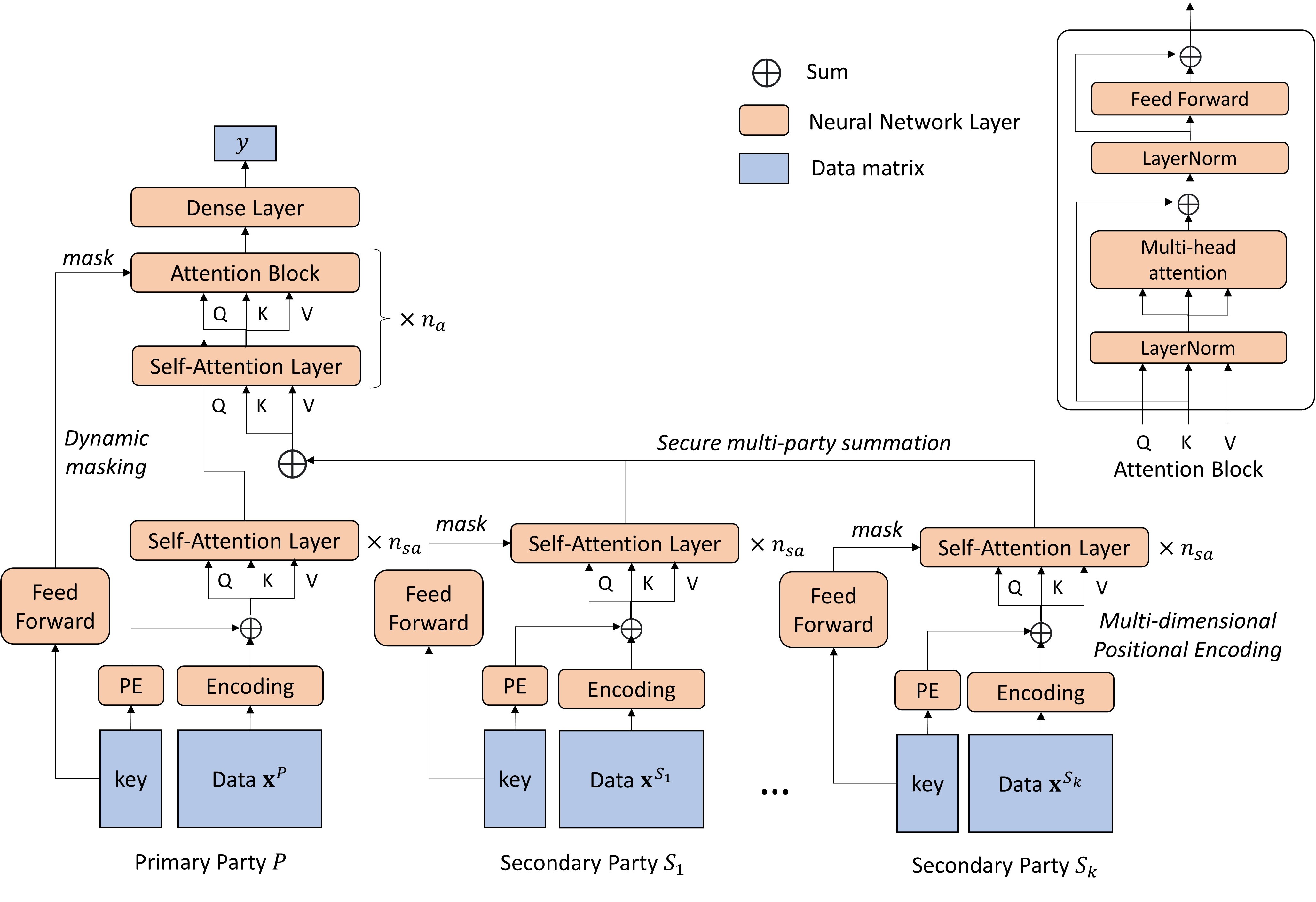}
    \caption{Structure of federated transformer (PE: multi-dimensional positional encoding)}
    \label{fig:fet-structure}
     \vspace{-5pt}
\end{figure*}

\paragraph{Dynamic Masking.} The size of the neighborhood varies significantly depending on the party and key values. Consequently, including a large number of neighbors $K$ for all parties can hinder the model's ability to extract meaningful information and result in overfitting. To address this, we introduce a dynamic ``key padding mask'' in the transformer, generated from the identifier values using a trainable MLP. This approach allows the model to effectively disregard distant data records, thereby eliminating the influence of irrelevant data when $K$ is large. This concept resembles the weight gate in FedSim, but it diverges by using identifiers as inputs instead of similarities, enhancing privacy by preventing the transmission of similarity data across parties. 

The learned dynamic masking is visualized in Figure~\ref{fig:viz-dm}. We derive two key insights from the visualization: (1) Dynamic masking effectively focuses on a localized area around the primary party's identifiers. Data records with distant identifiers on secondary parties (in cooler colors) receive small negative mask values, reducing their significance in the attention layers - without accessing the primary party's original identifiers. (2) The focus area varies in scale and direction across samples: for example, the left figure concentrates on a small bottom area, the middle figure on a small top area, and the right figure on a broad area in all directions.

\begin{figure}[htpb]
        \centering
        \includegraphics[width=0.3\linewidth]{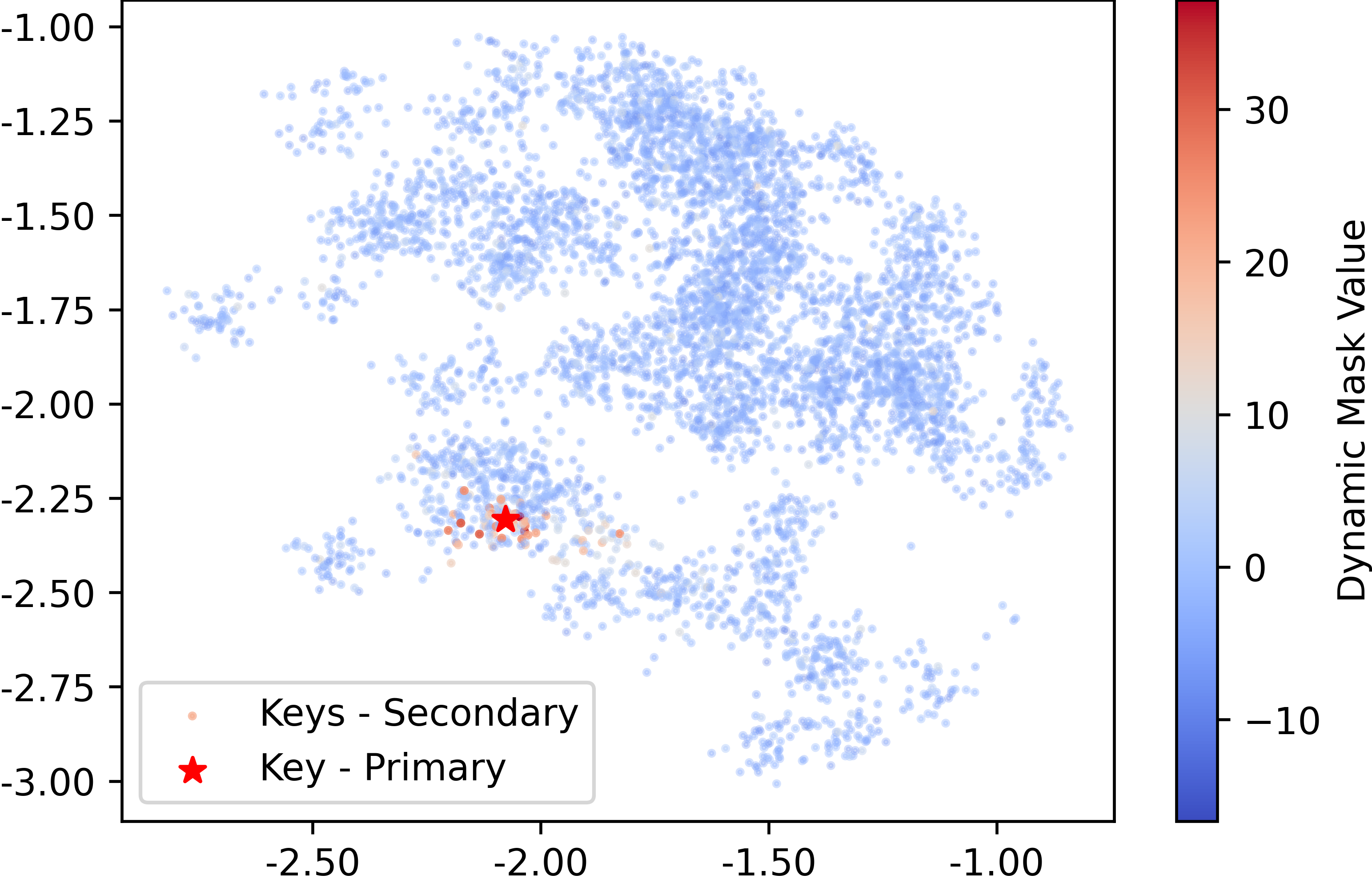}
        \includegraphics[width=0.3\linewidth]{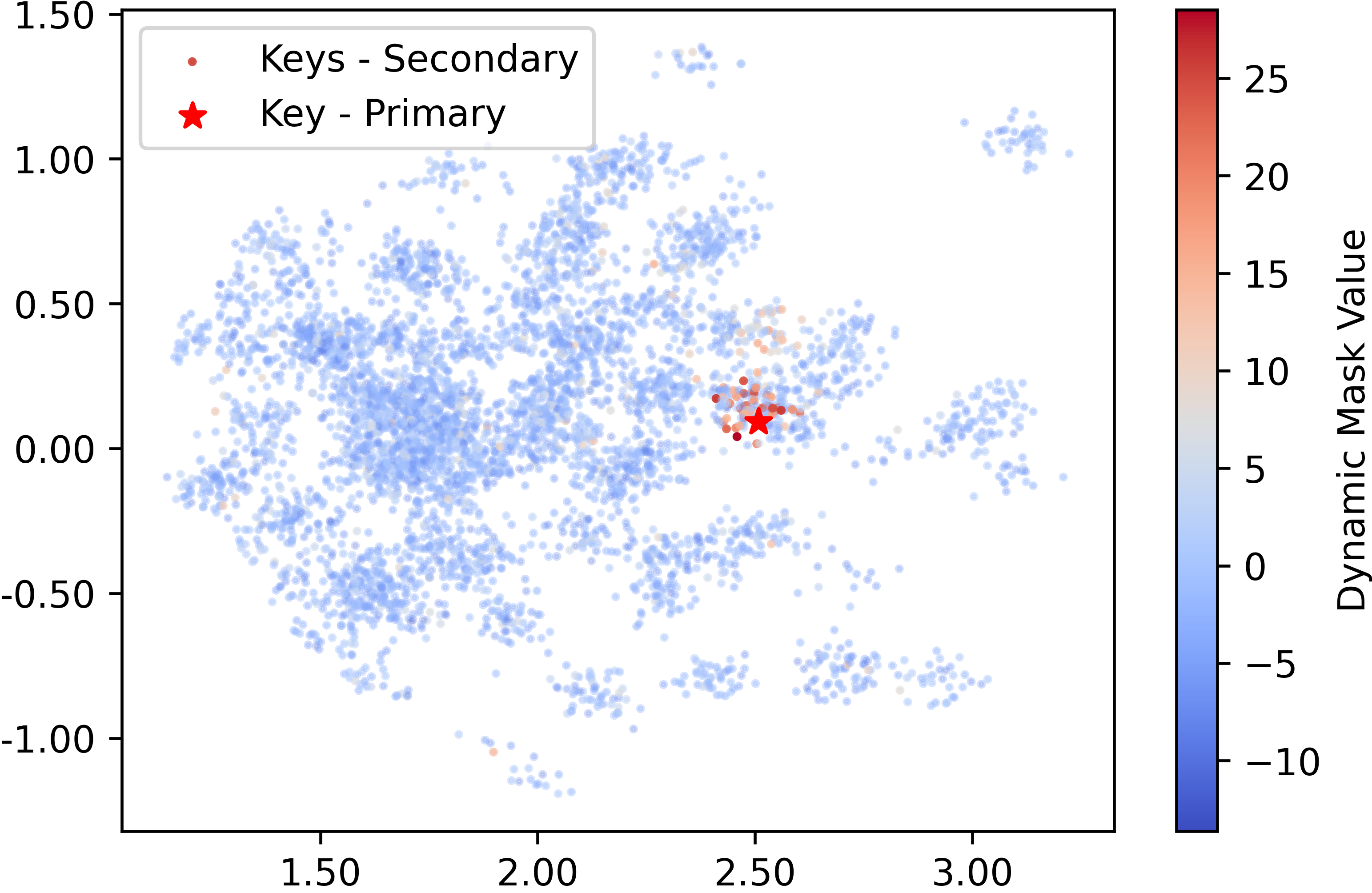}
        \includegraphics[width=0.3\linewidth]{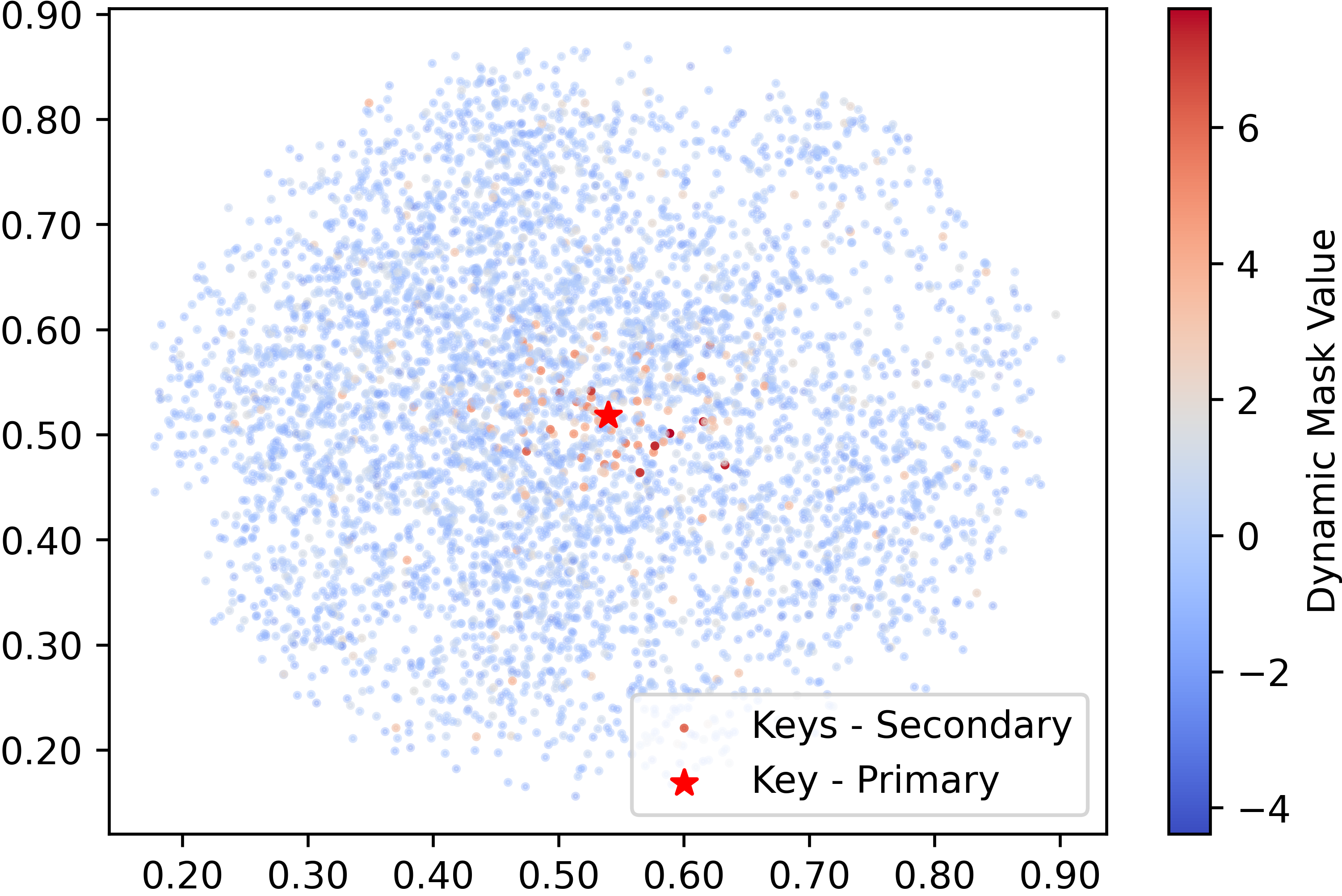}
        \caption{Learned dynamic masks of different samples: Each figure displays one sample (red star) from the primary party fuzzily linked with 4900 samples (circles) from 49 secondary parties. The position indicates the sample's identifier, and colors reflect learned dynamic mask values. Larger mask values signify higher importance in attention layers. } %
        \label{fig:viz-dm}
    \end{figure}

\paragraph{Party Dropout.} Extending the Federated Transformer (FeT) to support multiple parties can be challenging for several reasons. First, the communication bandwidth required by the primary party becomes a significant bottleneck within the SplitAvg framework, increasing linearly with the number of parties. Second, the inclusion of many parties can result in an excessive number of parameters, which may lead to overfitting. To mitigate these issues, we introduce the concept of \textit{party dropout}. Inspired by traditional dropout \cite{srivastava2014dropout}, we randomly set a portion $r_d$ of the parties' representations to zero during training. This method serves as a form of regularization, thus helping to reduce overfitting, while also significantly cutting down on communication overhead. In our experiments, we demonstrate that increasing the party dropout rate to 0.8 leads to minimal accuracy loss and can even improve accuracy. Consequently, the communication overhead on the primary party can be reduced by up to 80\%, enhancing scalability when dealing with large numbers of parties.

Like traditional dropout, it is crucial to maintain consistent scaling of the representations during both training and testing phases. This consistency is naturally achieved within the SplitAvg framework. During the averaging process, if a ratio $r_d k$ of parties is set to zero, we adjust by dividing only by the number of non-zero parties, $(1-r_d)k$. This method ensures that the scale of the averaged representations remains consistent across training and testing, regardless of the value of $r_d$.

\paragraph{Positional Encoding Averaging.} In positional encoding (PE), it is generally expected that the distances between encoded representations are positively correlated with the distances between identifiers. In FeT, each party employs its own encoder and PE layer, each tasked with encoding its local identifiers into representations. This configuration leads to significant PE misalignment issues, as illustrated in Figure~\ref{fig:pe-misalign}. Although the identifiers and their corresponding encoded representations maintain a positive correlation within the PE layer of each party, there is almost no correlation between the identifiers and encoded representations across different parties. This lack of correlation causes integration issues and affects accuracy. However, directly sharing a PE layer among all parties is not viable due to privacy concerns. To address this, we propose positional encoding averaging.

Every $T_{pe}$ epoch, the positional encoding layers are averaged and broadcast to all parties under a secure multi-party computation (MPC) scheme, akin to FedAvg~\cite{konevcny2016federated} in horizontal federated learning~\cite{li2021survey}. While the privacy of the transmitted model can be a concern, this issue is an orthogonal open problem in horizontal federated learning.

\begin{figure}
    \centering
     \vspace{-5pt}
    \begin{subfigure}{0.32\textwidth}
        \includegraphics[width=\textwidth]{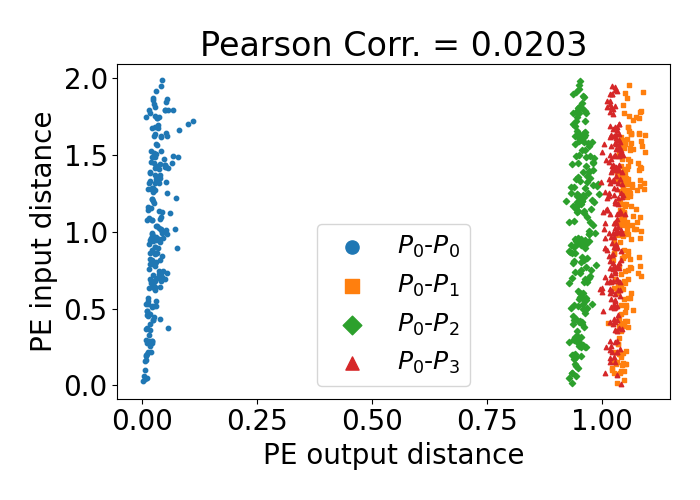}
        \caption{Keys: $P_0$, encodings: $P_1\sim P_3$}
    \end{subfigure}
    \begin{subfigure}{0.32\textwidth}
        \includegraphics[width=\textwidth]{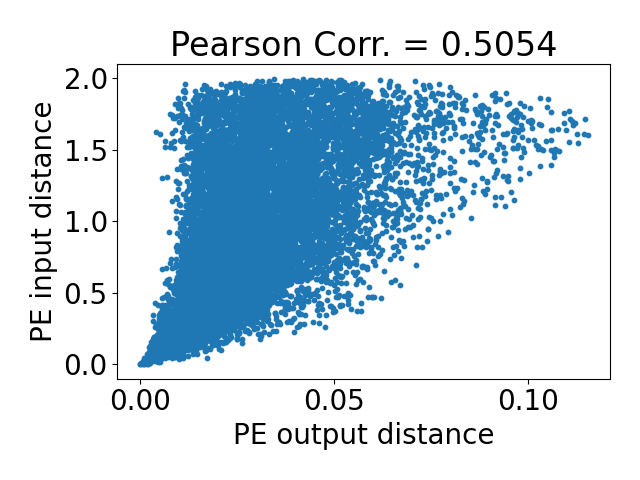}
        \caption{Keys: $P_0$, encodings: $P_0$}
    \end{subfigure}
    \begin{subfigure}{0.32\textwidth}
        \includegraphics[width=\textwidth]{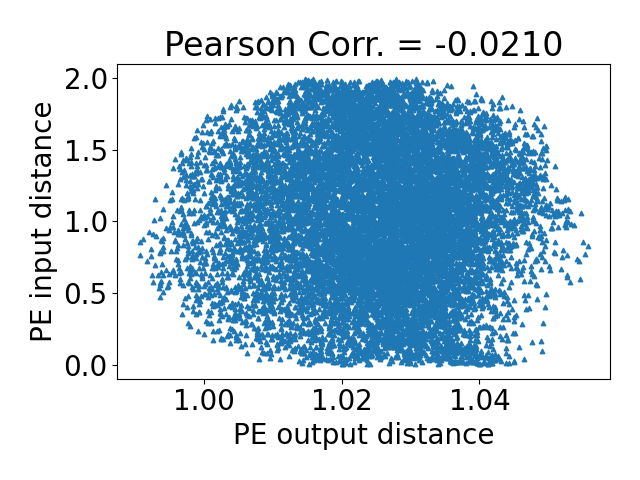}
        \caption{Keys: $P_0$, encodings: $P_3$}
    \end{subfigure}
    \caption{Misalignment of positional encoding ($P_0$: primary party; $P_1\sim P_3$: secondary parties)}
    \label{fig:pe-misalign}
     \vspace{-5pt}
\end{figure}

\subsection{Training}\label{sec:training}

The FeT training process begins with employing Privacy-Preserving Record Linkage (PPRL)~\cite{vatsalan2017privacy} to evaluate identifier similarities between the primary party \(P\) and each secondary party. Secondary parties each contribute a random subset for linkage (line 5). For each \(P\)'s record, \(K\) nearest neighbors within these subsets from secondary parties are determined (line 6). The training leverages data embeddings of dimensions \(B\times L \times H\), where \(B\) is batch size, \(L\) is the sequence length, and \(H\) is the hidden layer size, following the transformer architecture. In FeT's context, \(L=1\) for the primary and \(L=K\) for secondary parties, linking each primary record with \(K\) neighboring records from the secondaries. Identifiers are transformed into vectors using multi-dimensional positional encoding \cite{li2021learnable} and combined with data representations for processing via self-attention blocks (lines 7, 10). Secondary parties' representations are averaged under the MPC protocol. The primary party then employs attention blocks for forward propagation to compute the final prediction (line 13). Backpropagation sends gradient updates from the primary to secondary parties to refine their local models (lines 14-16). The privacy mechanism including norm clipping (lines 8, 11) and distributed Gaussian noise (line 12) are further discussed in Section~\ref{sec:privacy}.

\begin{algorithm}[h!]
\SetAlgoLined
\DontPrintSemicolon
\LinesNumbered
\caption{Training Process of Federated Transformer}\label{alg:fet-train}
\SetKwInOut{Input}{Input}
\SetKwInOut{Output}{Output}
\Input{Primary party $\mathbf{x}^P$, secondary parties $\mathbf{x}^{S_1},\dots,\mathbf{x}^{S_k}$, label $\mathbf{y}$, noise scale $\sigma$, sampling ratio $q$, clipping threshold $C$}
\Output{Local models $\theta^{P}_l, \theta^{S_1}_l,\dots,\theta^{S_k}_l$ and aggregation model $\theta^P_a$ }

\BlankLine
Initialize model parameters $\theta^P_a, \theta^{P}_l, \theta^{S_1}_l,\dots,\theta^{S_k}_l$\;
\For{epoch $t \in [T]$}{
    \For{instance $x_i^P\in \mathbf{x}^P$ on primary party}{
        \For{party $h\in\{S_1,\dots,S_k\}$}{
            $\mathbf{\tilde{x}}^h \leftarrow $ randomly choose $q$ ratio from $\mathbf{x}^h$\;
            $\mathbf{\tilde{x}}_i^h \leftarrow $ link $K$ records with $x_i^P$ from $\mathbf{\tilde{x}}^h$\;
            $\mathbf{\tilde{R}}_i^h \leftarrow f(\theta^h_l;\mathbf{\tilde{x}}_i^h)$\;
            $\mathbf{\hat{R}}_i^h \leftarrow \mathbf{\tilde{R}}_i^h/\max\left(1, \frac{\|k\mathbf{\tilde{R}}_i^h\|_2}{C}\right)$ \tcp*{Norm clipping}
        }
        $\mathbf{R}_i^P \leftarrow f(\theta^P_l;x_i^P)$\;
        $\mathbf{\hat{R}}_i^P \leftarrow \mathbf{R}_i^P/\max\left(1, \frac{k\|\mathbf{R}_i^P\|_2}{C}\right)$ \tcp*{Norm clipping}
        $\mathbf{H}_i \leftarrow \text{MPCAvg}\left(\mathbf{R}_i^{S_1},\dots,\mathbf{R}_i^{S_k},\mathcal{N}(0,C^2\sigma^2)\right)$\;
        $\hat{y}_i \leftarrow f(\theta^P_a; \mathbf{H}_i)$\;
        $\nabla_{\theta^P_a} \leftarrow \frac{\partial \ell(y_i,\hat{y}_i)}{\partial \theta^P_a},\;\theta^P_a\leftarrow \theta^P_a - \eta_t\nabla_{\theta^P_a}$\;
        \For{party $h\in\{P,S_1,\dots,S_k\}$}{
            $\nabla_{\theta^h_l} \leftarrow \frac{\partial \ell(y_i,\hat{y}_i)}{\partial \theta^h_a},\; \theta_l^h\leftarrow \eta_t\nabla_{\theta^h_l}$\;
        }
    }
}
\end{algorithm}
\vspace{-7pt}

\section{Privacy}\label{sec:privacy}
In this section, we address the challenges of privacy in multi-party fuzzy VFL. First, the risk of transferring raw similarities has been mitigated by the design of the FeT itself. Second, to address the increasing costs when more parties join, we introduce a multi-party privacy-preserving VFL framework, \textit{SplitAvg}, which is compatible with FeT. The architecture of SplitAvg is illustrated in Figure~\ref{fig:split-sum}. SplitAvg integrates differential privacy (DP), secure multi-party computation (MPC) \cite{bonawitz2017practical}, and norm clipping to enhance the privacy of representations. Additionally, to further improve the utility of FeT under DP, we employ privacy amplification techniques that reduce the noise scales by incorporating random sampling.

\subsection{Differentially Private Split Neural Network - SplitAvg}\label{sec:dp-split-sum}

This section outlines three techniques applied to the SplitAvg to improve privacy: representation norm clipping, privacy amplification, and MPC with distributed Gaussian noise. These strategies collectively protect the privacy of each secondary party's representations through differential privacy and ensure that privacy risks do not escalate with an increase in the number of parties due to MPC.

\textbf{Representation Norm Clipping.}
The magnitude of the $\ell_2$-norm is pivotal in determining the sensitivity of differential privacy. To limit the maximum change of the $\ell_2$-norm, norm clipping is essential. Specifically, for a representation $\mathbf{R}$, we ensure that $\|\mathbf{R}\|_2 \le C$, where $C$ is a predefined positive real number. This is achieved by scaling $\mathbf{R}$ by a factor of $C$, formally, \( \mathbf{\hat{R}}=\mathbf{R}/\max(1, \|\mathbf{R}\|_2/C)\). Through this process, any representation $\mathbf{R}$ with $\|\mathbf{R}\|_2$ exceeding $C$ is scaled to $C$, while values of $\|\mathbf{R}\|_2$ below $C$ remain unaffected.

\textbf{Secure Multi-Party Computation with Distributed Gaussian Noise.} 
To address the challenges of applying differential privacy in multi-party VFL, we propose a method that integrates noise addition into the process of aggregating representations, facilitated through MPC. In this setup, each secondary party first independently conducts representation norm clipping to limit the scale of their data. Subsequently, these clipped representations, along with Gaussian noise $\mathcal{N}(0,\sigma/k^2)$ independently generated by each of the $k$ secondary parties, are aggregated through averaging under MPC. For the primary party, this aggregation is equivalent to adding independent Gaussian noise $\mathcal{N}(0,\sigma^2)$ to the averaged result. The adoption of MPC in our framework ensures that the secondary parties do not need to individually add \(\mathcal{N}(0,\sigma^2)\) noise to their representations. Instead, as the primary party only has access to the averaged result under MPC, each secondary party can add a smaller amount of noise. This method effectively improves utility with a small efficiency cost due to MPC.

\begin{wrapfigure}{r}{0.5\textwidth} %
    \centering
    \vspace{-15pt}
    \includegraphics[width=\linewidth]{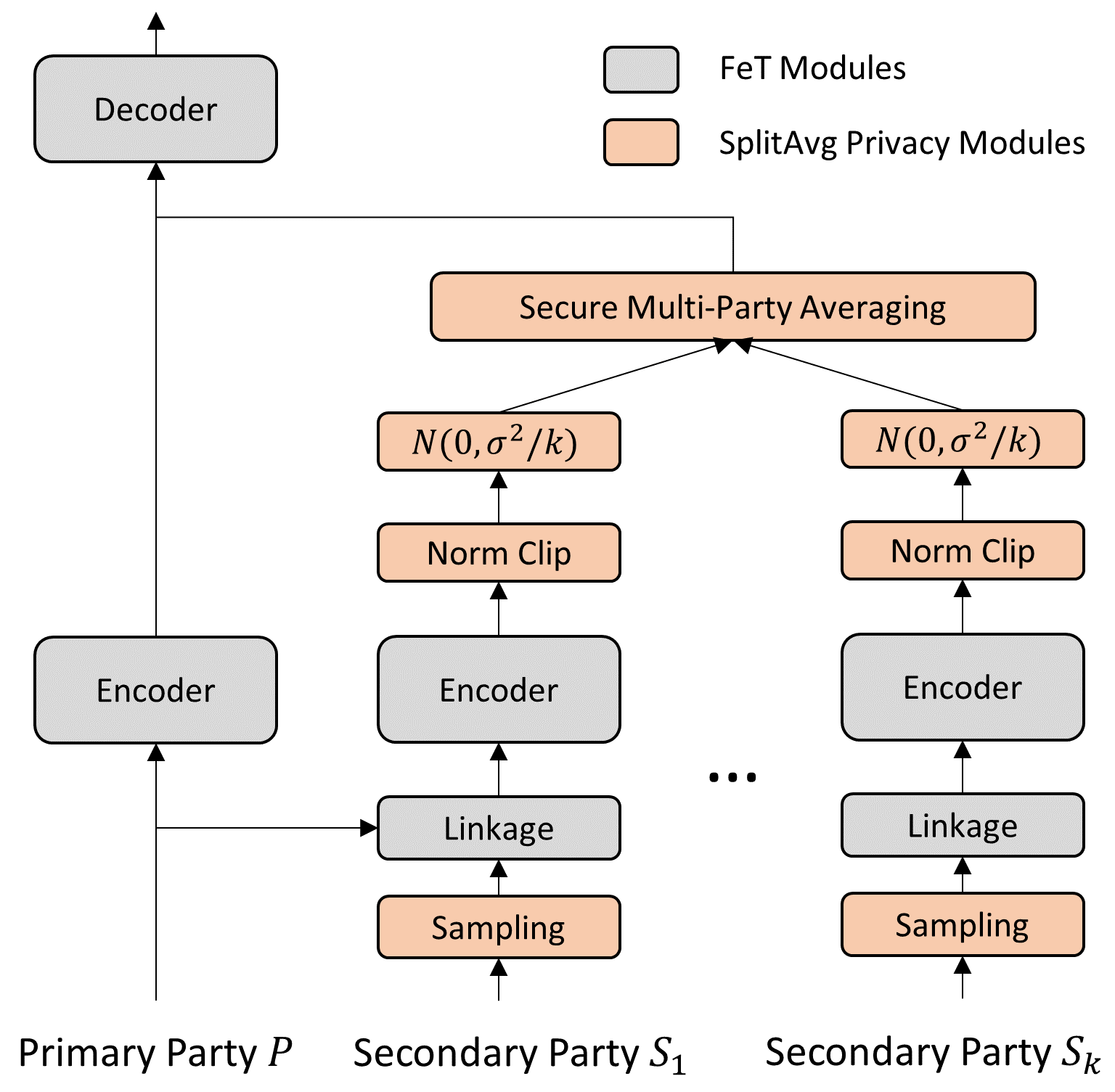}
    \caption{Differentially private split-sum neural network}
    \label{fig:split-sum}
    \vspace{-20pt}
\end{wrapfigure}

\textbf{Privacy Amplification by Secondary Subsampling.} 
This technique is specifically designed for FeT configurations. According to the principle of privacy amplification \cite{beimel2014bounds}, applying a function to a randomly sampled subset of data enhances privacy compared to applying the same function to the entire dataset. By initiating linkage from a randomly sampled subset rather than the full dataset, the privacy parameters effectively shift from \((\varepsilon, \delta)\) to \((q\varepsilon, q\delta)\), where \(q < 1\) represents the sampling rate. In FeT, the primary party typically selects subsets of candidate data records for training from each secondary party, targeting those with neighboring identifiers. By pre-sampling these subsets at a rate of \(q\) before conducting a k-nearest neighbors (kNN) search, we avoid processing the entire dataset, which in turn reduces the noise required to maintain the same privacy level.

\subsection{Privacy Guarantee}
Our analysis of differential privacy focuses on a hypothetical global dataset linked using all secondary parties, denoted as $\mathbf{x}^{S_1}, \dots, \mathbf{x}^{S_k}$. Since these datasets are correlated, removing one data record from this global dataset will result in changes to the representations in all secondary parties. Consequently, privacy loss accumulates across secondary parties without the use of MPC. However, with MPC, a single aggregated noise, formed by distributing smaller noise contributions among parties, can be added, effectively reducing the overall required noise. The privacy guarantee for these representations is formally articulated in Theorem~\ref{thm:fet-dp}, with the proof provided in Appendix~\ref{sec:proof}.

\begin{restatable}{theorem}{fetdp}\label{thm:fet-dp}
For certain constants \( c_1 \) and \( c_2 \), given a sampling rate \( q \), the total number of epochs \( T \), and the number of batches \( B \) in each epoch, each representation $\mathbf{R}^{S_k}$ achieves \( (\varepsilon, \delta) \)-differential privacy for all \( \varepsilon < c_1 q^2 T \), with any \( \delta > 0 \), by selecting the standard deviation \( \sigma \) of the Gaussian noise mechanism as follows:
\begin{equation}\label{eq:fet-dp-bound}
    \sigma > c_2 \frac{q \sqrt{BT \log(1/\delta)}}{\varepsilon}.
\end{equation}
\end{restatable}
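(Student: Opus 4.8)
The plan is to reduce the claim to the Moments Accountant bound of Theorem~\ref{thm:dpsgd}, by viewing each noisy aggregation step in Algorithm~\ref{alg:fet-train} (line~12) as one subsampled Gaussian query on the hypothetical global dataset linked from all secondary parties. The argument has four ingredients: fixing the neighboring relation and computing the $\ell_2$-sensitivity of the aggregated representation, reducing a single step to an analytic Gaussian mechanism, invoking privacy amplification by subsampling, and composing over all $BT$ queries.

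First I would make the sensitivity computation precise, since this is where the multi-party correlation enters and is the main obstacle. On the global dataset, two neighbors differ by one record whose features are split across all $k$ secondary parties, so removing it perturbs the per-record representation $\mathbf{\hat{R}}^{S_h}$ at \emph{every} party $h$ simultaneously. The norm clipping in lines~8 and~11 is applied to $\|k\mathbf{\hat{R}}\|_2$, which forces $\|\mathbf{\hat{R}}^{S_h}\|_2 \le C/k$ at each party. Summing (equivalently, averaging under \text{MPCAvg}) the removed record's contributions across the $k$ parties and applying the triangle inequality then bounds the total change by $k\cdot(C/k)=C$ (a factor of $2$ under the replace-one convention is absorbed into the constants). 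The crucial point I must verify carefully is that this bound is \emph{independent of} $k$: the $k$-scaled clipping is designed precisely so that the aggregate query has $\ell_2$-sensitivity $\Delta_2=C$ no matter how many parties participate, which is exactly what makes a single MPC-distributed noise draw sufficient rather than requiring noise that grows with $k$.

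With $\Delta_2=C$ established, each step adds $\mathcal{N}(0,C^2\sigma^2\mathbf{I})$ to a quantity of sensitivity $C$, which by the Gaussian Mechanism stated earlier is equivalent to a unit-sensitivity query perturbed by $\mathcal{N}(0,\sigma^2)$, i.e.\ a Gaussian mechanism with effective noise multiplier $\sigma$. Because each step first draws a random subset at rate $q$ before the kNN linkage (lines~5--6), privacy amplification by subsampling applies per step. It then remains to account for composition: one run performs one such query per batch, so with $B$ batches per epoch over $T$ epochs there are $BT$ queries in total on the same global dataset.

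Finally I would feed these ingredients into the Moments Accountant of Theorem~\ref{thm:dpsgd}, taking the number of steps to be $BT$ and the sampling probability to be $q$. This yields $(\varepsilon,\delta)$-differential privacy whenever $\sigma > c_2\,q\sqrt{BT\log(1/\delta)}/\varepsilon$, matching Eq.~\eqref{eq:fet-dp-bound}, with $c_1,c_2$ inherited from Theorem~\ref{thm:dpsgd}; the stated range $\varepsilon < c_1 q^2 T$ is sufficient since it implies the required condition $\varepsilon < c_1 q^2 BT$ (as $B\ge 1$). Post-processing then transfers the guarantee from the noisy aggregate observed by the primary party to each secondary representation $\mathbf{R}^{S_k}$, completing the argument.
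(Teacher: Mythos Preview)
Your proposal is correct and follows essentially the same approach as the paper's proof: bound each clipped per-party representation by $C/k$, use the triangle inequality to get $\|\mathbf{H}_i\|_2\le C$, count $BT$ subsampled Gaussian queries, and invoke Theorem~\ref{thm:dpsgd}. Your write-up is in fact somewhat more careful than the paper's---you explicitly distinguish sensitivity from a norm bound, flag the replace-one factor absorbed into the constants, and note the post-processing step---but the skeleton of the argument is identical.
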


\section{Experiments}\label{sec:experiment}
This section presents the experimental setup and results. We begin by outlining the experimental settings in Section~\ref{exp:settings}, followed by an assessment of performance across varying numbers of parties and neighbors in Section~\ref{exp:accuracy}. We then analyze the privacy of FeT in Section~\ref{exp:privacy}. Additionally, an ablation study is conducted in Appendix~\ref{exp:ablation} to evaluate the contribution of each component to performance, including dynamic masking, party dropout, positional encoding, key fuzziness, and SplitAvg. The performance of FeT with exact key matching is assessed in Appendix~\ref{sec:exact-link}, while the computational and memory efficiency of MPC and training is evaluated in Appendix~\ref{sec:efficiency}. Privacy evaluation on two-party real datasets is included in Appendix~\ref{sec:real-priv}. Furthermore, FeT's performance under imbalanced feature splits across parties, based on VertiBench~\cite{wu2024vertibench}, is presented in Appendix~\ref{sec:imbalance}.

\subsection{Experimental Settings}\label{exp:settings}
\paragraph{Datasets.} Our experiments utilize five datasets, including three real-world datasets: \texttt{house}~\cite{house,airbnb}, \texttt{bike}~\cite{bike,taxi}, and \texttt{hdb}~\cite{hdb, school}, along with two high-dimensional datasets: \texttt{gisette}~\cite{dataset-gisette} and \texttt{MNIST}~\cite{mnist}. Detailed descriptions of these datasets can be found in Appendix~\ref{exp:settings}. To simulate multi-party fuzzy VFL, we partition the features equally and randomly among the parties. The primary party's feature dimensions are reduced to 4 using principal component analysis (PCA) to serve as universal keys. To create fuzzy linked scenarios, we add independent Gaussian noise with a scale of 0.05 to the keys of each party.

\paragraph{Baselines.} 
We include three baselines in our experiments: (1) Solo: training only on the primary party; (2) Top1Sim: linking each data record in the primary party only with its most similar neighbor in the secondary parties; (3) FedSim~\cite{wu2022coupled}: training on the top $K$ neighboring data records.

\subsection{Performance}\label{exp:accuracy}

\paragraph{Two-party fuzzy VFL.}
In this experiment, we evaluate the performance of FeT in two-party settings without privacy mechanisms. The detailed results are presented in Table~\ref{tab:vfl-algorithms}. Our experiments demonstrate that FeT consistently outperforms the leading two-party fuzzy VFL methods. Notably, this performance improvement is achieved while enhancing privacy protections, as FeT does not involve transferring similarity data.

\begin{table}[htbp]
	\centering
	\small
	\caption{Root Mean Squared Error (RMSE) on real-world two-party fuzzy VFL datasets}\label{tab:vfl-algorithms}
	\begin{tabular}{@{}llll@{}}
		\toprule
		\textbf{Algorithm} & \textbf{house} & \textbf{bike} & \textbf{hdb} \\ \midrule
		Solo                & 73.27 ± 0.16            & 244.33 ± 0.75          & 33.97 ± 0.61          \\
            Top1Sim & 58.54 ± 0.35 & 256.19 ± 1.39 & 31.56 ± 0.21 \\
		FedSim           & 42.12 ± 0.23            & 235.67 ± 0.27 & 27.13 ± 0.06 \\
  \midrule
		FeT         & \textbf{39.75 ± 0.29}   & \textbf{232.98 ± 0.62}          & \textbf{26.94 ± 0.15}          \\ \bottomrule
	\end{tabular}
\end{table}

\paragraph{Effect of Number of Neighbors $K$.}
In this experiment, we assess the impact of the number of neighbors, $K$, on FeT's performance by varying $K$ from 1 to 100. The results are displayed in Figure~\ref{fig:ablation-knnk}. The FedSim baseline is trained using the optimal $K$ value (i.e., 50 for \texttt{hdb} and 100 for \texttt{house} and \texttt{bike}). The figure reveals two key insights: First, FeT's performance improves as $K$ increases, demonstrating its ability to filter useful information even as the number of unrelated data records grows. Second, FeT consistently outperforms all baselines at larger values of $K$, highlighting its superiority in fuzzy VFL scenarios.

\begin{figure}[h!]
        \centering
        \includegraphics[width=0.28\textwidth]{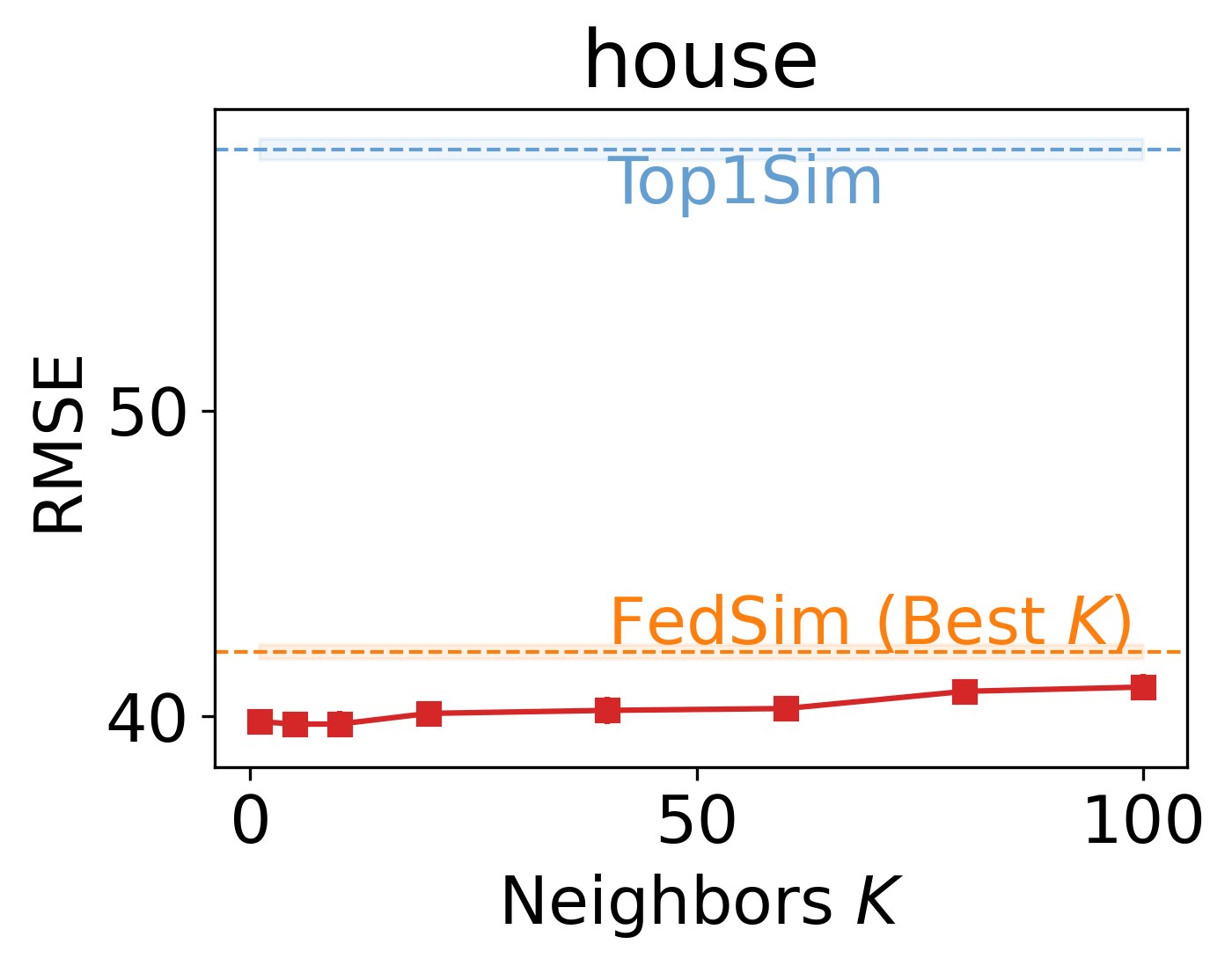}
        \includegraphics[width=0.28\textwidth]{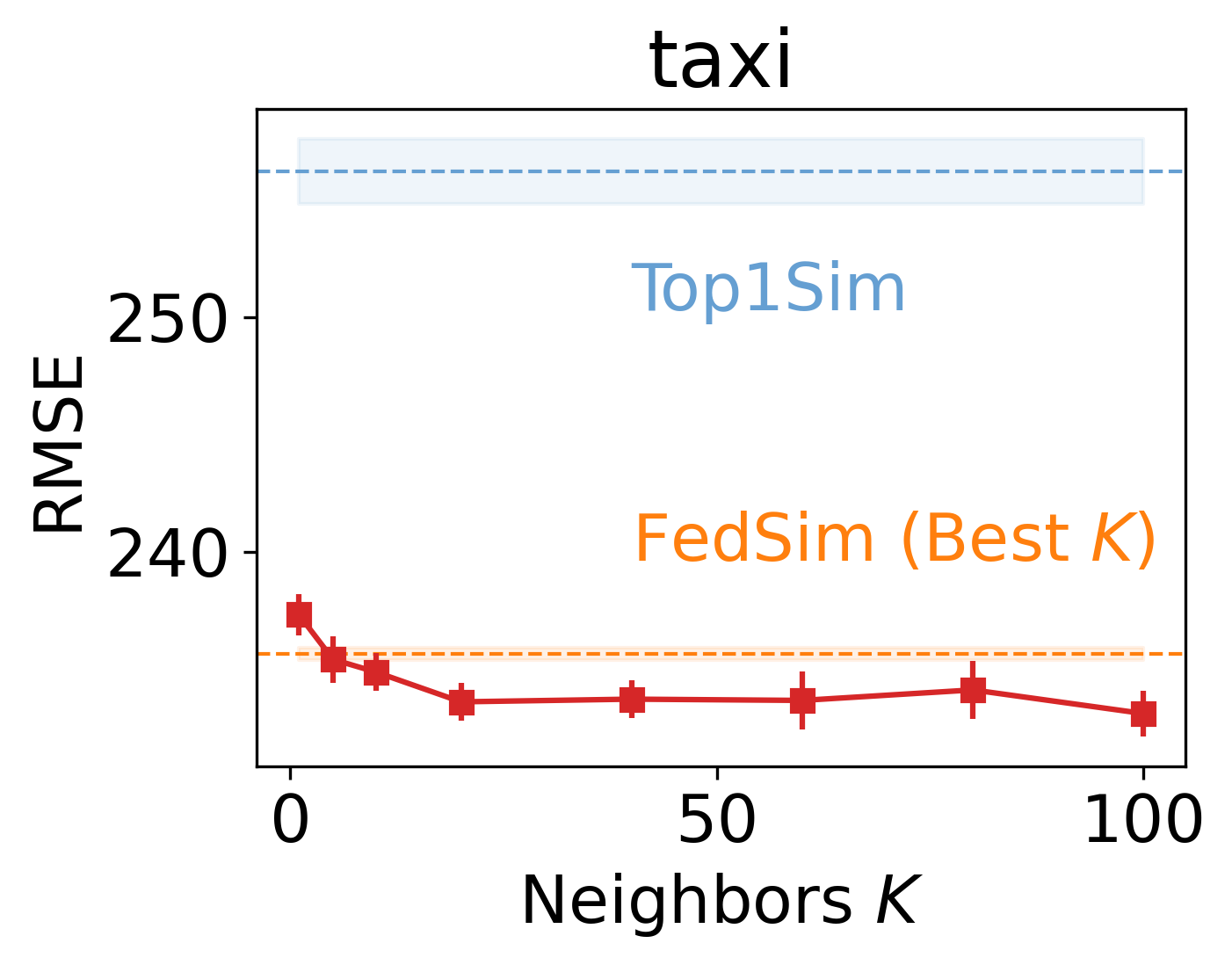}
        \includegraphics[width=0.28\textwidth]{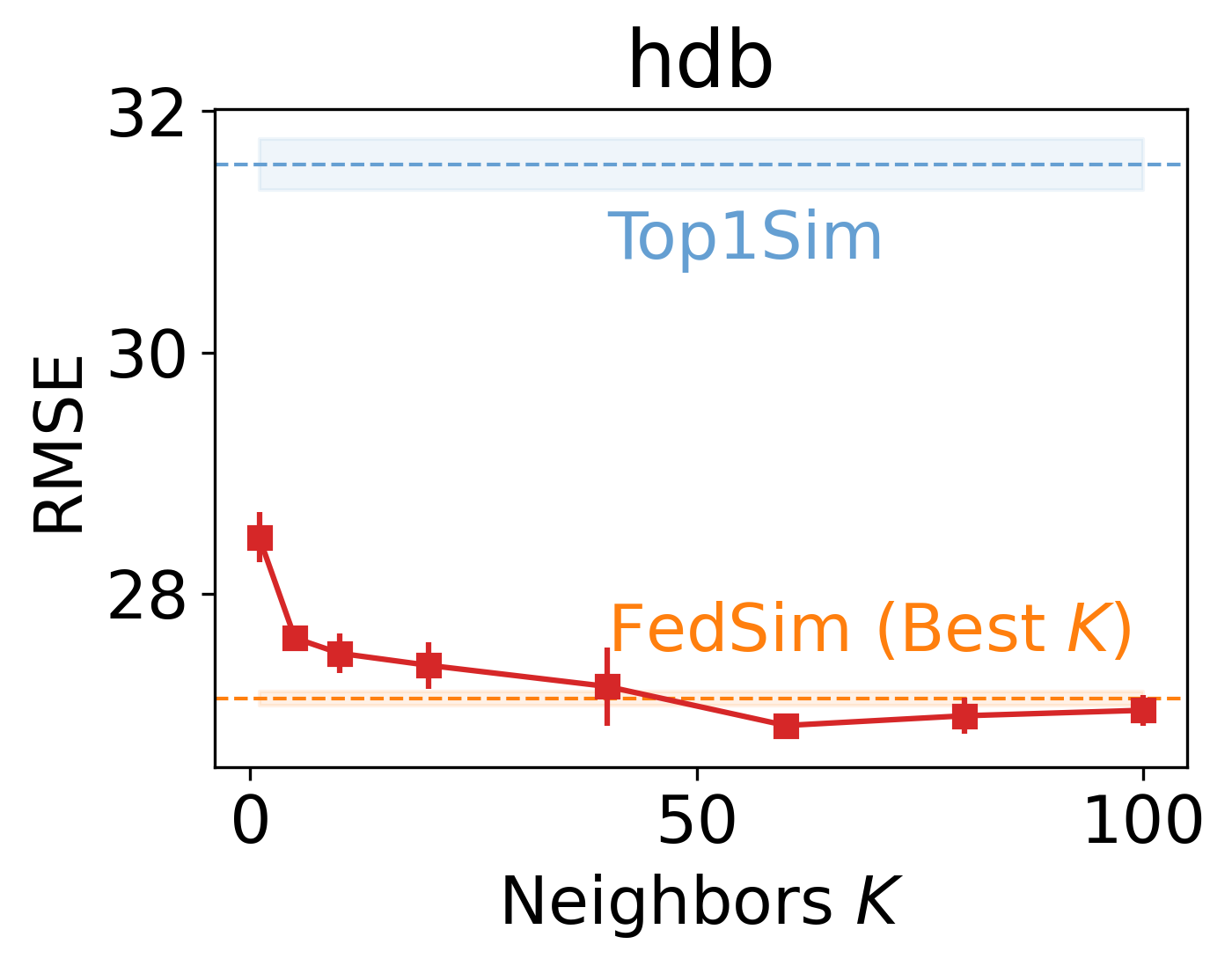}
        \caption{Effect of Different Number of Neighbors $K$ on FeT Performance}
        \label{fig:ablation-knnk}
\end{figure}

\paragraph{Effect of Number of Parties}\label{exp:scalability}
In this experiment, we assess FeT's performance in fuzzy VFL with various numbers of parties. Due to the absence of real multi-party VFL data, we employ synthetic data for our evaluations. We partition the features equally and randomly among the parties, reducing the primary party's feature dimensions to 4 using PCA as the universal keys. To simulate fuzzy linked scenarios, we add independent Gaussian noise with a scale of 0.05 to the keys of each party.

Figure~\ref{fig:fet-scale} shows that FeT generally outperforms the baselines, particularly with a larger number of parties. This advantage is attributed to Solo's lack of informative features and Top1Sim's noise-affected linkage. FedSim performs poorly as the top-1 linked secondary parties are unaware of the primary parties' keys, leading to misalignment in subsequent soft linkage and training steps. On the \texttt{gisette} dataset with \(k=10\), FeT and other models slightly underperform compared to Solo, likely due to overfitting given \texttt{gisette}'s small size.
\begin{figure}[htpb]
    \centering
    \includegraphics[width=0.3\linewidth]{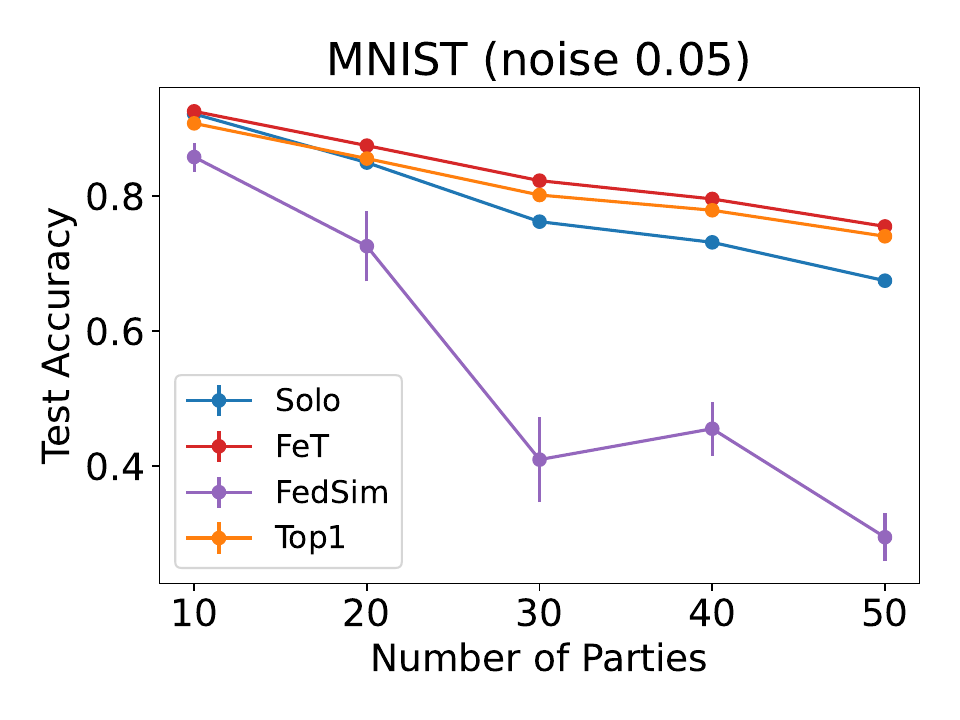}
    \includegraphics[width=0.3\linewidth]{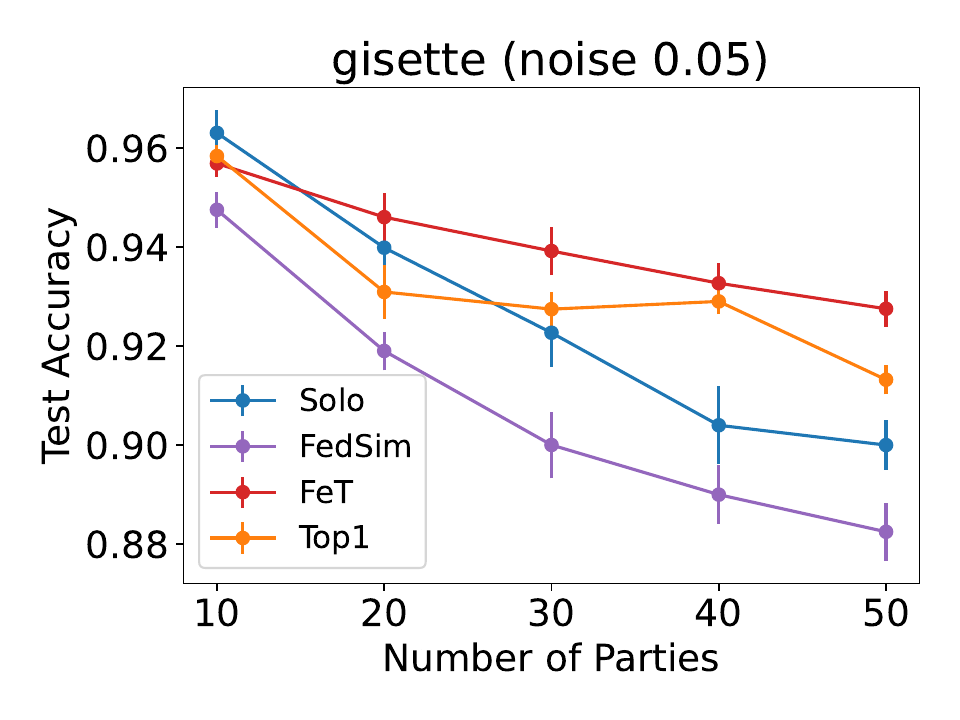}
    
    \caption{Impact of number of parties on FeT performance}
    \label{fig:fet-scale}
\end{figure}

\subsection{Privacy}\label{exp:privacy}

In this subsection, we analyze how the performance of FeT varies with different noise scales (\(\sigma\)) and sampling rates on secondary parties, demonstrating the impact of privacy constraints on its accuracy. The results are depicted in Figure~\ref{fig:fet-dp-acc-scale}. We observe three key points: First, a moderate sampling rate has a negligible effect on model performance and may even slightly improve performance (e.g., on the \texttt{MNIST} dataset) by reducing overfitting. Second, despite increasing noise levels and enhanced privacy guarantees, FeT consistently outperforms baseline models. Third, the \(\varepsilon-\sigma\) privacy-noise curves illustrate that solely adding Gaussian noise without MPC, even with advanced analysis theorems such as Rényi Differential Privacy (RDP)~\cite{mironov2017renyi}, would require much larger noise scales compared to our approach that integrates MPC.

\begin{figure}[htpb]
    \centering
    \vspace{-7pt}
    \includegraphics[width=0.24\linewidth]{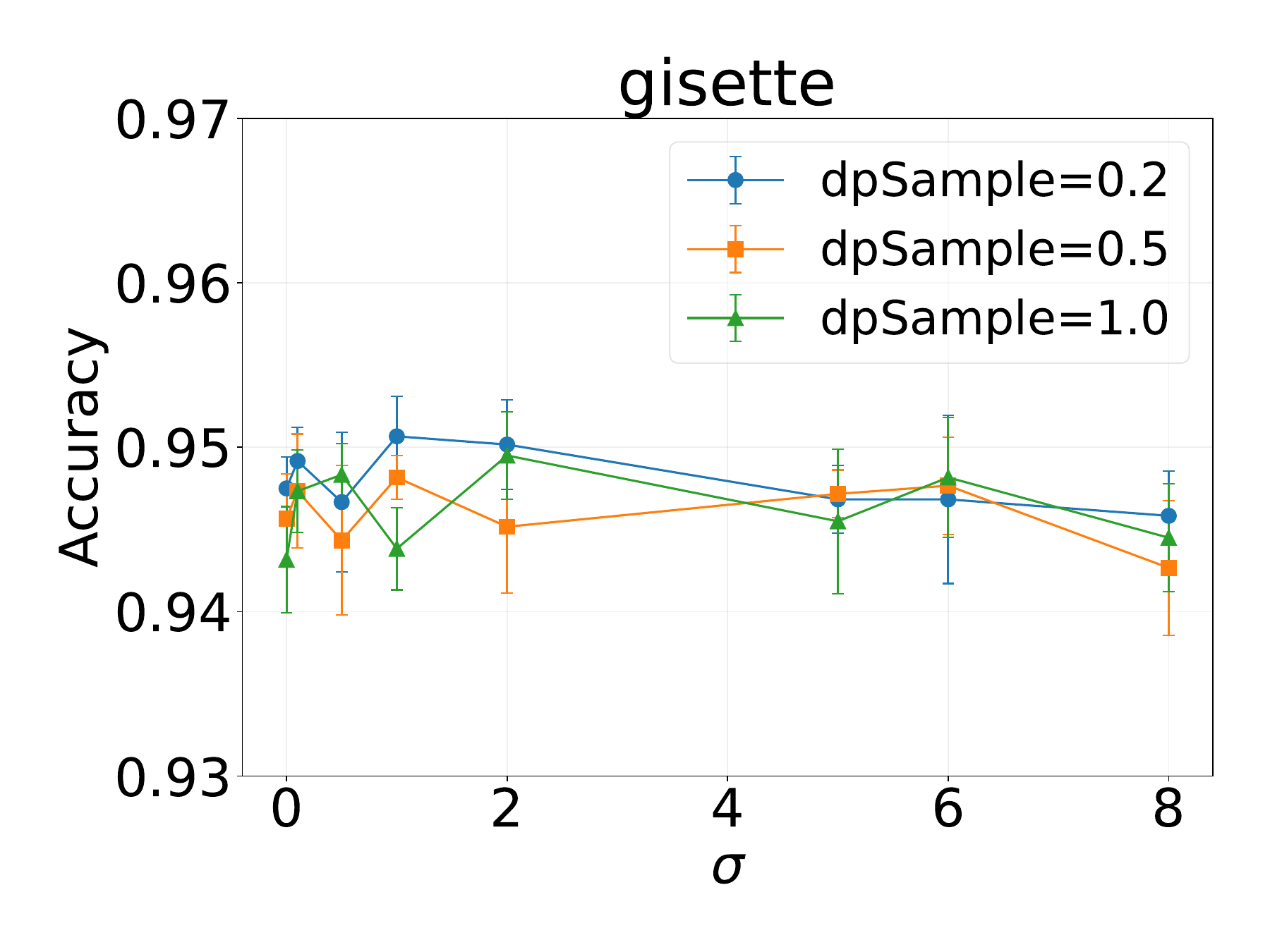}
    \includegraphics[width=0.24\linewidth]{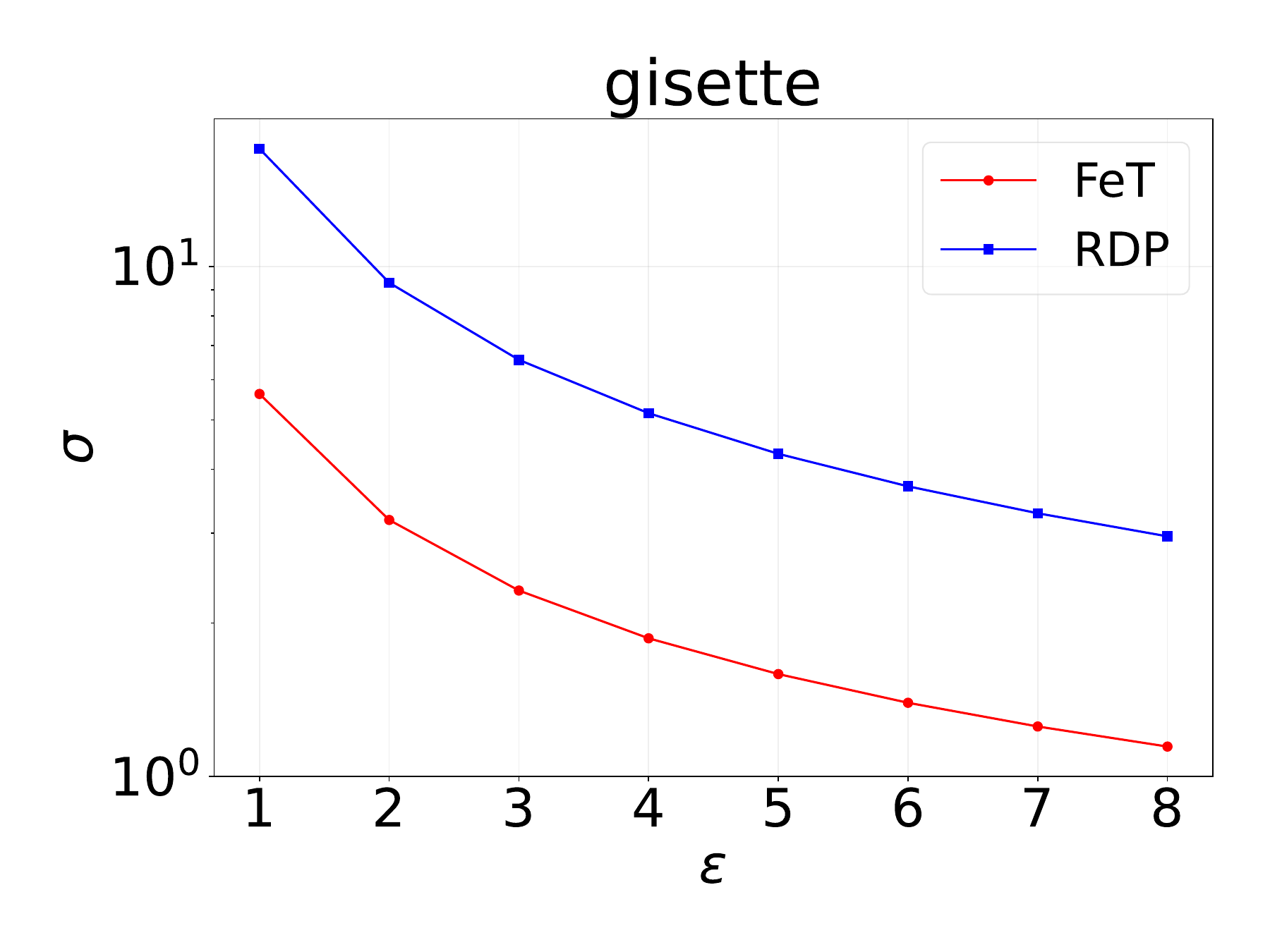}
    \includegraphics[width=0.24\linewidth]{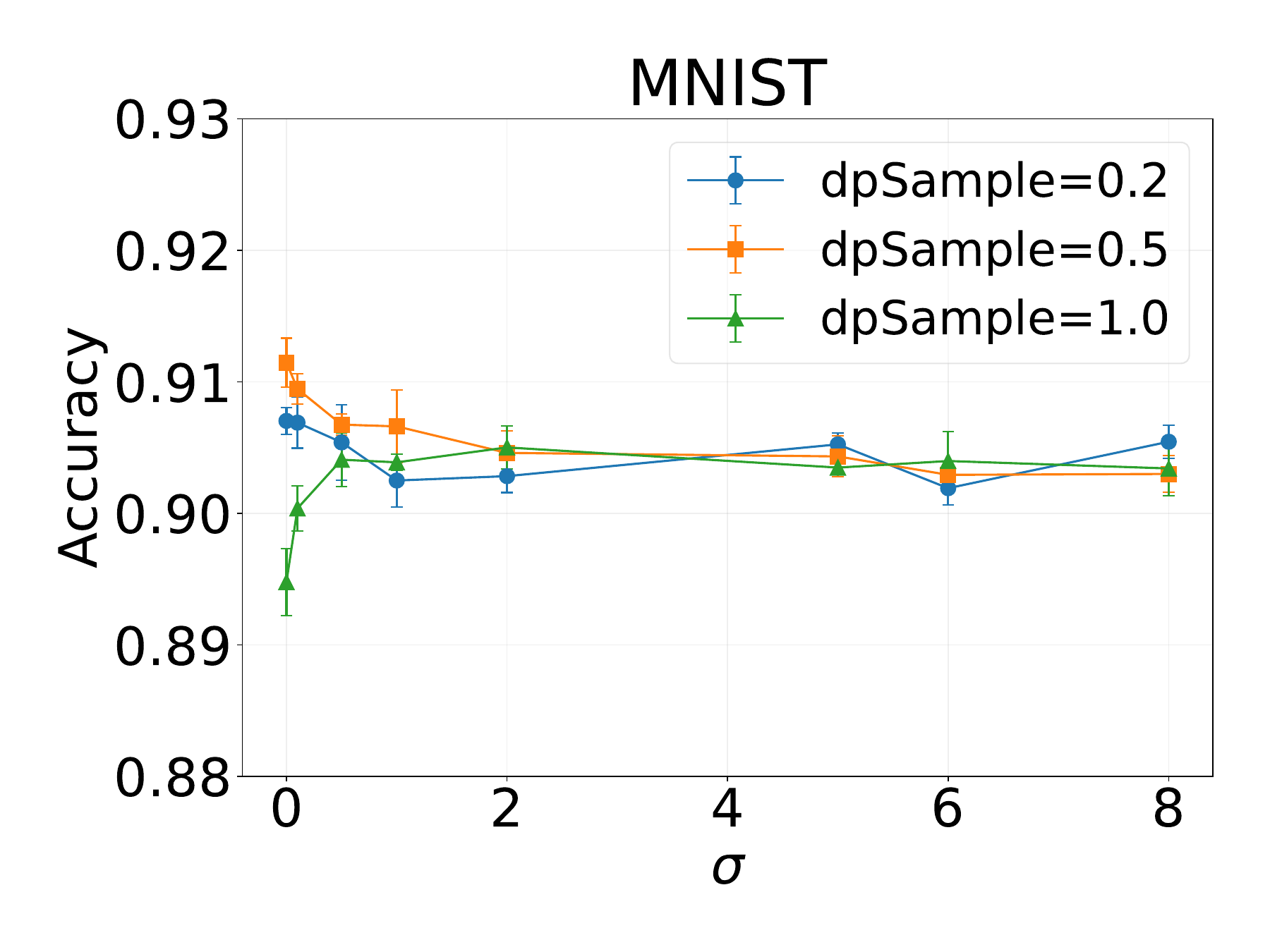}
    \includegraphics[width=0.24\linewidth]{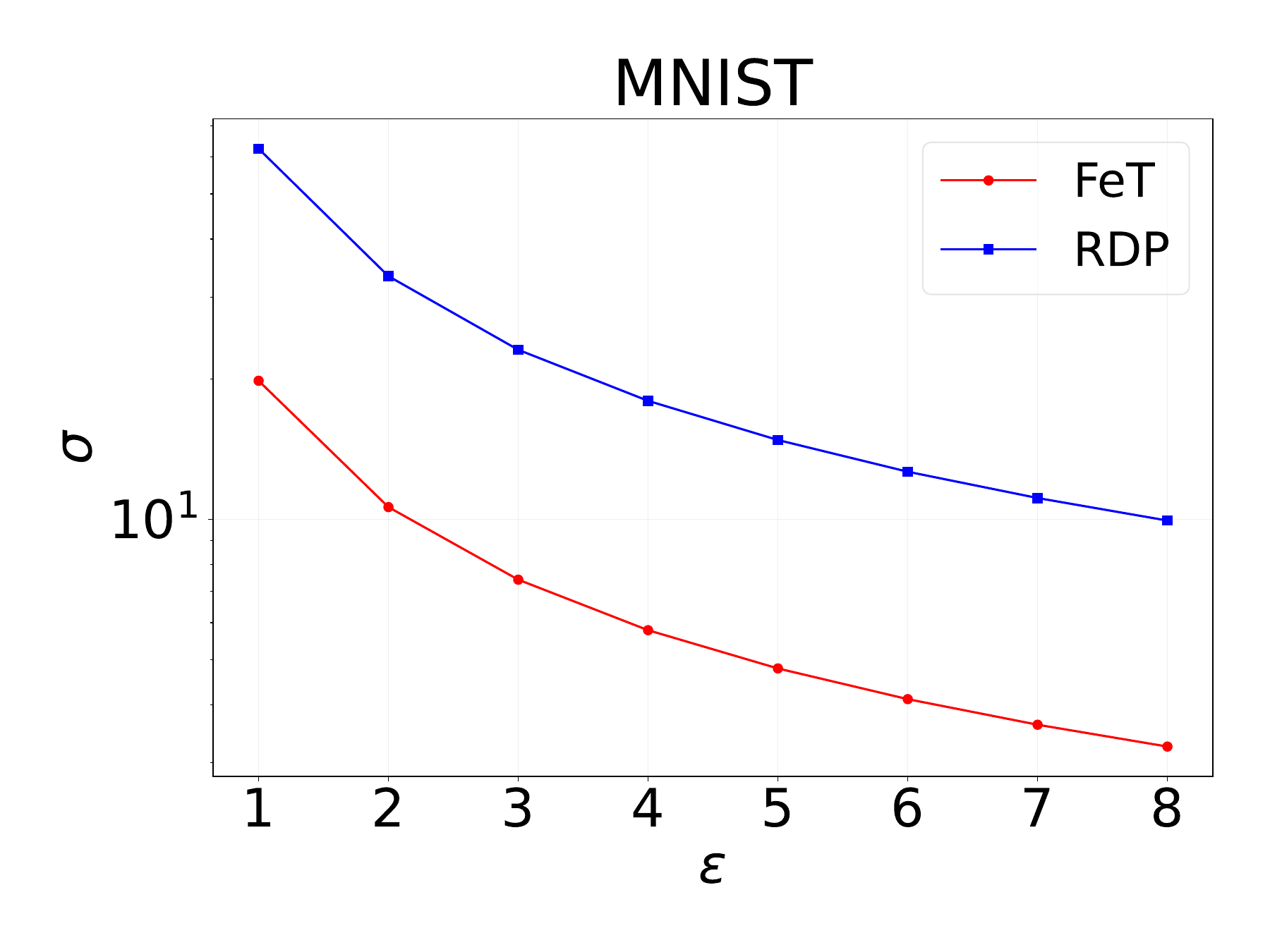}
    \caption{Impact of noise scale $\sigma$ on FeT accuracy and relationship between $\sigma$ and $\varepsilon$ under 10-party fuzzy VFL (RDP: without MPC, privacy loss calculated by Rényi differential privacy)}
    \label{fig:fet-dp-acc-scale}
    
\end{figure}

\section{Conclusion} \label{sec:conclusion}
In this study, we introduce the Federated Transformer (FeT), specifically designed to support multi-party VFL while effectively addressing critical challenges related to performance, privacy, and communication. Furthermore, we provide theoretical proof of FeT's differential privacy, ensuring that data representations remain protected from secondary parties. Notably, our experiments demonstrate that FeT surpasses baseline models, even under stringent privacy guarantees and within the traditional two-party setting, establishing its efficacy and robustness in complex federated environments.

\textbf{Broader Impact.} The architecture of FeT, even without privacy mechanisms, has potential applications in multimodal learning. Multimodal tasks often require the alignment of data records across different modalities, which can be quite challenging. For instance, aligning 24Hz video streams with 48kHz audio tracks is complex, as each video frame may correspond to a range of audio samples. FeT has shown its capability to effectively learn from such fuzzily aligned data. Furthermore, the transformer model has proven effective across various data types, including images, text, and tabular data, highlighting FeT's suitability for multimodal learning applications.

\section*{Acknowledgments}
This research/project is supported by the National Research Foundation Singapore and DSO National Laboratories under the AI Singapore Programme (AISG Award No: AISG2-RP-2020-018), the National Research Foundation Singapore and Infocomm Media Development Authority under its Trust Tech Funding Initiative. Any opinions, findings and conclusions or recommendations expressed in this material are those of the author(s) and do not reflect the views of National Research Foundation Singapore, DSO National Laboratories, and Infocomm Media Development Authority. This research is also sponsored by Webank Scholars Program. We thank the AMD Heterogeneous Accelerated Compute Clusters (HACC) program (formerly known as the XACC program - Xilinx Adaptive Compute Cluster program) for the generous hardware donation.

\bibliographystyle{ACM-Reference-Format}
\bibliography{references}

\newpage
\appendix

\addcontentsline{toc}{section}{Appendix} %
\part{Appendix} %
\parttoc %

\section{Proof}\label{sec:proof}
\fetdp*
\begin{proof}
The proof leverages the moments accountant bound \cite{abadi2016deep}, which is applicable to a sequence of Gaussian mechanisms applied to subsampled data. We begin by establishing that the output's $\ell_2$-norm for each function is constrained by a constant $C$. This constraint ensures that each randomized function adheres to differential privacy under the Gaussian mechanism. By determining the cumulative count of Gaussian mechanisms applied, we can directly invoke Theorem~\ref{thm:dpsgd} to reach our conclusion.

To clarify the process, we apply norm clipping to each party as specified in Section~\ref{sec:privacy}, scaling each party at a rate of $C/k$. This scaling guarantees that, for every $\hat{\mathbf{R}}_i^h$, the condition $\|\hat{\mathbf{R}}_i^h\|_2 \le C/k$ is satisfied. Using the triangle inequality within normed vector spaces, we derive:
    \begin{equation}
       \|\mathbf{H}_i\|_2 = \left\| \sum_{h=1}^k \hat{\mathbf{R}}_i^h \right\|_2 \le 
      \sum_{h=1}^k \left\| \hat{\mathbf{R}}_i^h \right\|_2 = k\cdot \frac{C}{k} = C.
    \end{equation}
Since the $\ell_2$-norm of $\mathbf{H}_i$ is bounded by $C$, adding Gaussian noise to $\mathbf{H}_i$ satisfies the conditions for differential privacy. Throughout the training, $B\cdot T$ independent noises are introduced, resulting in a sequence of Gaussian mechanisms targeting a randomly selected subset at a ratio $q$. Consequently, Equation~\ref{eq:fet-dp-bound} is derived by directly applying Theorem~\ref{thm:dpsgd}.
\end{proof}

\section{Experimental Details}\label{sec:exp-detail}
\paragraph{Datasets.} In this section, we include the detailed information of each dataset used in the experiment. These real-world datasets are obtained in the same manner as those utilized in FedSim, with each dataset comprising two parties. Details of the real datasets are presented in Table~\ref{tab:fet-dataset}. The synthetic dataset, \texttt{gisette} \cite{dataset-gisette}, consists of 6,000 instances and 5,000 features and serves as a binary classification task with balanced labels. The MNIST dataset \cite{mnist} consists of 60,000 instances and 28x28 features for a 10-class digit classification task.

\begin{table*}[htpb]
    \begin{center}
    \caption{Basic information of real-world VFL datasets}\label{tab:fet-dataset}
    \resizebox{\textwidth}{!}{
    \begin{tabular}{c ccc ccc cc c}
    \toprule
        \multirow{2}{*}{\textbf{Dataset}} & \multicolumn{3}{c}{\textbf{Primary Party (w/ labels)}} & \multicolumn{3}{c}{\textbf{Secondary Party}} & \multicolumn{2}{c}{\textbf{Identifiers}} & \multirow{2}{*}{\textbf{Task}} \\
        \cmidrule(lr){2-4} \cmidrule(lr){5-7} \cmidrule(lr){8-9}
        & \#samples & \#features & ref & \#samples & \#features & ref & \#dims & type \\
        \midrule
        house  & 141,050 & 55 & \cite{house} & 27,827 & 25 & \cite{airbnb} & 2 & float & regression \\
        bike  & 100,000 & 6 & \cite{bike} & 200,000 & 964  & \cite{taxi} & 4 & float & regression \\
        hdb  & 92,095 & 70 & \cite{hdb} & 165 & 10 & \cite{school} & 2 & float & regression \\
        \bottomrule
    \end{tabular}
    }
    \end{center}
\end{table*}

\paragraph{Metrics.}
For regression tasks, Root Mean Square Error (RMSE) is utilized, while accuracy is applied to classification tasks. Early stopping is performed based on the validation set, with the corresponding test scores reported.

\paragraph{Hyperparameters.} Each algorithm was run until convergence, with a maximum of 50 to 100 epochs. The learning rate and weight decay were consistently set at $10^{-3}$ and $10^{-5}$, respectively. For the Solo model, a multi-layer perceptron (MLP) with two hidden layers of 400 units each was employed. In contrast, the Top1Sim utilized a single-layer MLP with a hidden size of 200 for both local and aggregation models. For FedSim, the number of $K$ neighbors was selected from the set $\{50,100\}$. For FeT, the number of blocks is set to 6 for both local model and aggregation model.

\paragraph{Environments.}
We evaluate FeT on a server equipped with dual Intel Xeon Gold 6346 CPUs, eight A100 GPUs with CUDA version 12.2, and 1008GB RAM, running Python 3.10.13 with PyTorch 2.1.1+cu121 on Linux kernel 6.5.0. Efficiency experiments were conducted on a machine powered by a 64-core Intel(R) Xeon(R) Gold 6226R CPU @ 2.90GHz and 376 GB of RAM. Each experiment was run five times, and we report the average and standard deviation.

\section{Ablation Study}\label{exp:ablation}

In this section, we evaluate the performance improvement of each proposed component of the FeT. Our findings indicate that \textbf{dynamic masking is crucial for enhancing performance, while both party dropout and positional encoding averaging contribute modestly to these improvements}. Detailed analyses are provided below.

\subsection{Dynamic Masking} 

We evaluate the performance of FeT with and without dynamic masking, as shown in Table~\ref{tab:abl-dyn-mask-pe}. The evaluation includes two-party datasets (\texttt{house}, \texttt{bike}, \texttt{hdb}) and two 50-party synthetic datasets (\texttt{MNIST} and \texttt{gisette}). The results indicate that dynamic masking leads to an improvement of up to 13 percentage points, particularly noticeable in datasets with a large number of parties. This suggests that dynamic masking significantly enhances model performance, especially in multi-party settings.

\begin{table}[htpb]
\caption{Effects of Dynamic Masking (DM) and Positional Encoding (PE) on FeT Performance}
\label{tab:abl-dyn-mask-pe}
\resizebox{\columnwidth}{!}{
\begin{tabular}{lccccc}
\toprule
\multicolumn{1}{c}{\multirow{2}{*}{\textbf{Model}}} & \multicolumn{5}{c}{\textbf{Datasets (metric)}}                                   \\ \cmidrule(l){2-6} 
\multicolumn{1}{c}{}                           & house (RMSE)        & bike (RMSE)          & hdb (RMSE)          & MNIST (Accuracy)       & gisette (Accuracy)     \\ \midrule
FeT w/o PE       & 43.28 ± 0.74 & 234.25 ± 0.93 & 27.31 ± 0.23 & - & - \\ 
FeT w/o DM           & 42.48 ± 0.45 & 236.26 ± 0.71 & 29.13 ± 0.18 & 72.89\% ± 1.43\% & 90.32\% ± 0.52\% \\
\midrule
FeT              & \textbf{41.34 ± 0.54} & \textbf{232.98 ± 0.62} & \textbf{26.94 ± 0.15} & \textbf{85.47\% ± 0.13\%} & \textbf{92.43\% ± 0.24\%} \\ \bottomrule
\end{tabular}
}
\end{table}

\subsection{Party Dropout} 

Next, we evaluate the effect of the dropout rate under specific hyperparameter settings: the number of parties $k=50$, the number of neighbors $K=100$, and key noise set to $0.05$. The impact of the party dropout rate is demonstrated in Figure~\ref{fig:abl-party-dropout} and Table~\ref{tab:abl-party-dropout-rate}. Our observations reveal that a moderate party dropout rate of 0.6 enhances FeT's generalization by reducing the model size. Notably, FeT maintains stable performance even as the dropout rate increases to 0.8. This indicates that party dropout not only improves generalization but also significantly reduces communication overhead across parties. Based on these findings, we set the dropout rate to 0.6 by default in multi-party experiments.

\begin{figure}[htpb]
    \centering
    \includegraphics[width=0.4\linewidth]{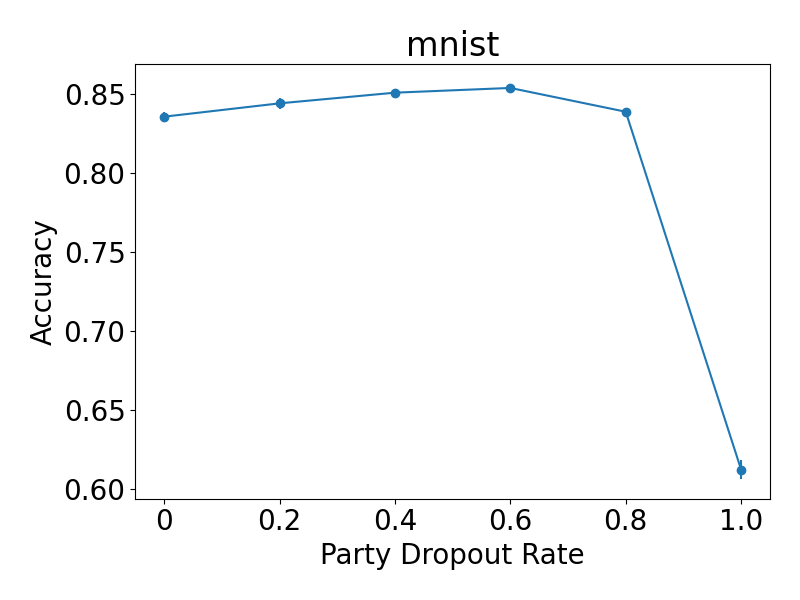}
    \includegraphics[width=0.4\linewidth]{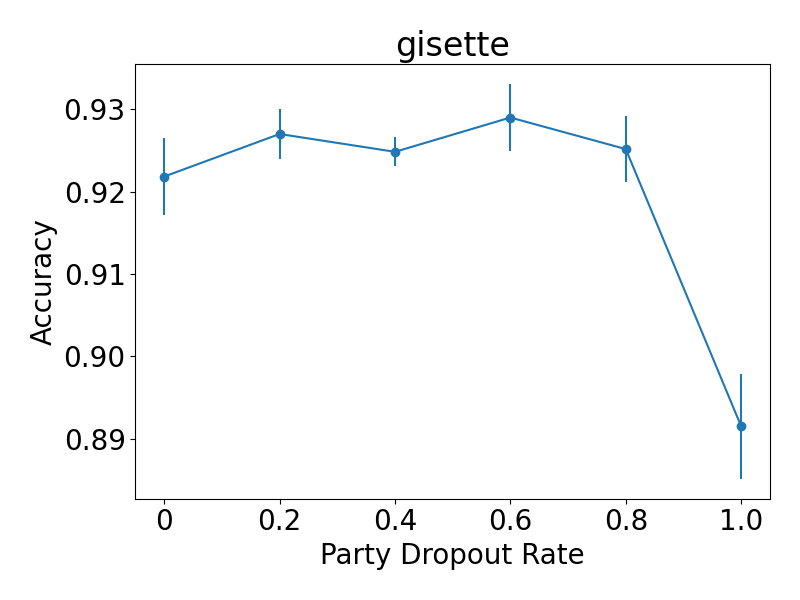}
    \caption{Effect of party dropout rate on FeT}
    \label{fig:abl-party-dropout}
\end{figure}

\begin{table}[htpb]
\caption{Effect of Party Dropout Rates on FeT Performance}
\label{tab:abl-party-dropout-rate}
\resizebox{\columnwidth}{!}{
\begin{tabular}{@{}lcccccc@{}}\toprule
\multirow{2}{*}{\textbf{Dataset}} & \multicolumn{6}{c}{\textbf{Party Dropout Rate}}                                                    \\ \cmidrule(l){2-7} 
                                  & 0                & 0.2              & 0.4              & 0.6                       & 0.8              & 1.0              \\ \midrule
gisette                           & 92.18\% ± 0.47\% & 92.70\% ± 0.30\% & 92.48\% ± 0.18\% & \textbf{92.90\% ± 0.41\%} & 92.52\% ± 0.40\% & 89.15\% ± 0.64\% \\
MNIST                             & 83.54\% ± 0.30\% & 84.39\% ± 0.35\% & 85.06\% ± 0.17\% & \textbf{85.36\% ± 0.25\%} & 83.85\% ± 0.11\% & 61.21\% ± 0.58\% \\ \bottomrule
\end{tabular}}
\end{table}

\subsection{Positional Encoding} 

We now assess the effect of the frequency of positional encoding (PE) averaging, as depicted in Figure~\ref{fig:abl-pe-feq} and Table~\ref{tab:abl-pe-feq-table}. We find that PE averaging yields improvements, particularly with a large number of parties, such as 50 on \texttt{MNIST}, where alignment of encodings becomes crucial. Based on our observations, we set the frequency to 1 in most experiments.

Additionally, we assess the impact of positional encoding on the performance of FeT, as detailed in Table~\ref{tab:abl-dyn-mask-pe}. These evaluation of \texttt{MNIST} and \texttt{gisette} are conducted with the number of neighbors $K=100$ and key noise $0.05$. The results indicate that positional encoding is important for enhancing the performance of FeT.

\begin{figure}[htpb]
    \centering
    \includegraphics[width=0.4\linewidth]{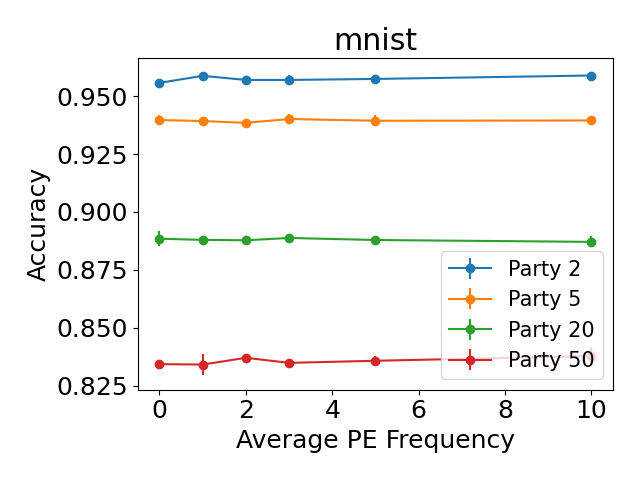}
    \includegraphics[width=0.4\linewidth]{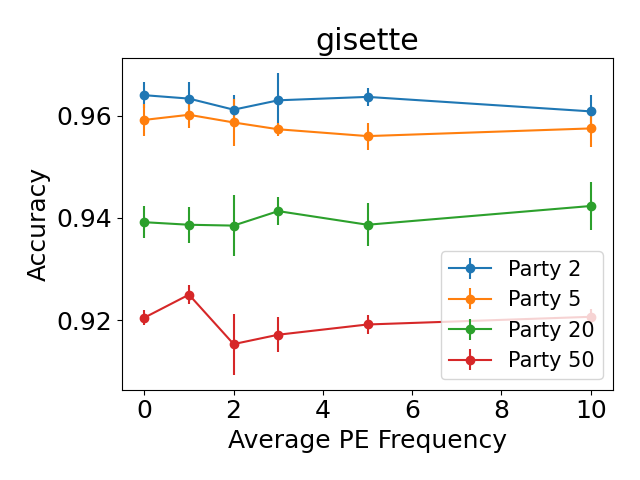}
    \caption{Effect of frequency of positional encoding averaging}
    \label{fig:abl-pe-feq}
\end{figure}

\begin{table}[htpb]
\caption{Ablation study for accuracy with different PE Average Frequency}
\label{tab:abl-pe-feq-table}
\resizebox{\columnwidth}{!}{
\begin{tabular}{@{}llcccccc@{}}
\toprule
\multirow{2}{*}{\textbf{Dataset}} & \multirow{2}{*}{\textbf{\#parties}} & \multicolumn{6}{c}{\textbf{PE Average Frequency}}                                                               \\ \cmidrule(l){3-8} 
                         &                            & 0                & 1                & 2                & 3                & 5                & 10               \\ \midrule
\multirow{4}{*}{gisette} & 2                          & \textbf{96.40\% ± 0.25\%} & 96.33\% ± 0.33\% & 96.12\% ± 0.29\% & 96.30\% ± 0.53\% & 96.37\% ± 0.17\% & 96.08\% ± 0.32\% \\
                         & 5                          & 95.92\% ± 0.31\% & \textbf{96.02\% ± 0.26\%} & 95.87\% ± 0.46\% & 95.73\% ± 0.12\% & 95.60\% ± 0.27\% & 95.75\% ± 0.37\% \\
                         & 20                         & 93.92\% ± 0.31\% & 93.87\% ± 0.35\% & 93.85\% ± 0.60\% & 94.13\% ± 0.27\% & 93.87\% ± 0.42\% & \textbf{94.23\% ± 0.48\%} \\
                         & 50                         & 92.05\% ± 0.15\% & \textbf{92.50\% ± 0.19\%} & 91.53\% ± 0.60\% & 91.72\% ± 0.34\% & 91.92\% ± 0.18\% & 92.07\% ± 0.14\% \\ \midrule
\multirow{4}{*}{MNIST}   & 2                          & 95.57\% ± 0.12\% & 95.88\% ± 0.09\% & 95.70\% ± 0.16\% & 95.70\% ± 0.20\% & 95.74\% ± 0.13\% & \textbf{95.89\% ± 0.09\%} \\
                         & 5                          & 93.97\% ± 0.22\% & 93.92\% ± 0.11\% & 93.84\% ± 0.18\% & \textbf{94.01\% ± 0.20\%} & 93.94\% ± 0.24\% & 93.95\% ± 0.09\% \\
                         & 20                         & 88.84\% ± 0.31\% & 88.79\% ± 0.17\% & 88.77\% ± 0.13\% & \textbf{88.88\% ± 0.12\%} & 88.79\% ± 0.19\% & 88.71\% ± 0.24\% \\
                         & 50                         & 83.43\% ± 0.16\% & 83.41\% ± 0.44\% & 83.70\% ± 0.09\% & 83.48\% ± 0.06\% & 83.57\% ± 0.22\% & \textbf{83.78\% ± 0.33\%} \\ \bottomrule
\end{tabular}}
\end{table}

\subsection{Fuzziness of Keys}

We evaluate the impact of identifier fuzziness on FeT's performance by introducing Gaussian noise of varying scales to the keys. The results are presented in Figure~\ref{fig:ablation-noise-scale}. From the figure, we derive two key observations: (1) Both FeT and baseline models show improved performance in more balanced scenarios. (2) FeT consistently outperforms the baselines across different levels of heterogeneity, demonstrating its robustness to varying degrees of noise. These findings highlight the resilience of FeT in the presence of noise, which is critical for practical applications.

\begin{figure}[htpb]
    \centering
    \includegraphics[width=0.4\textwidth]{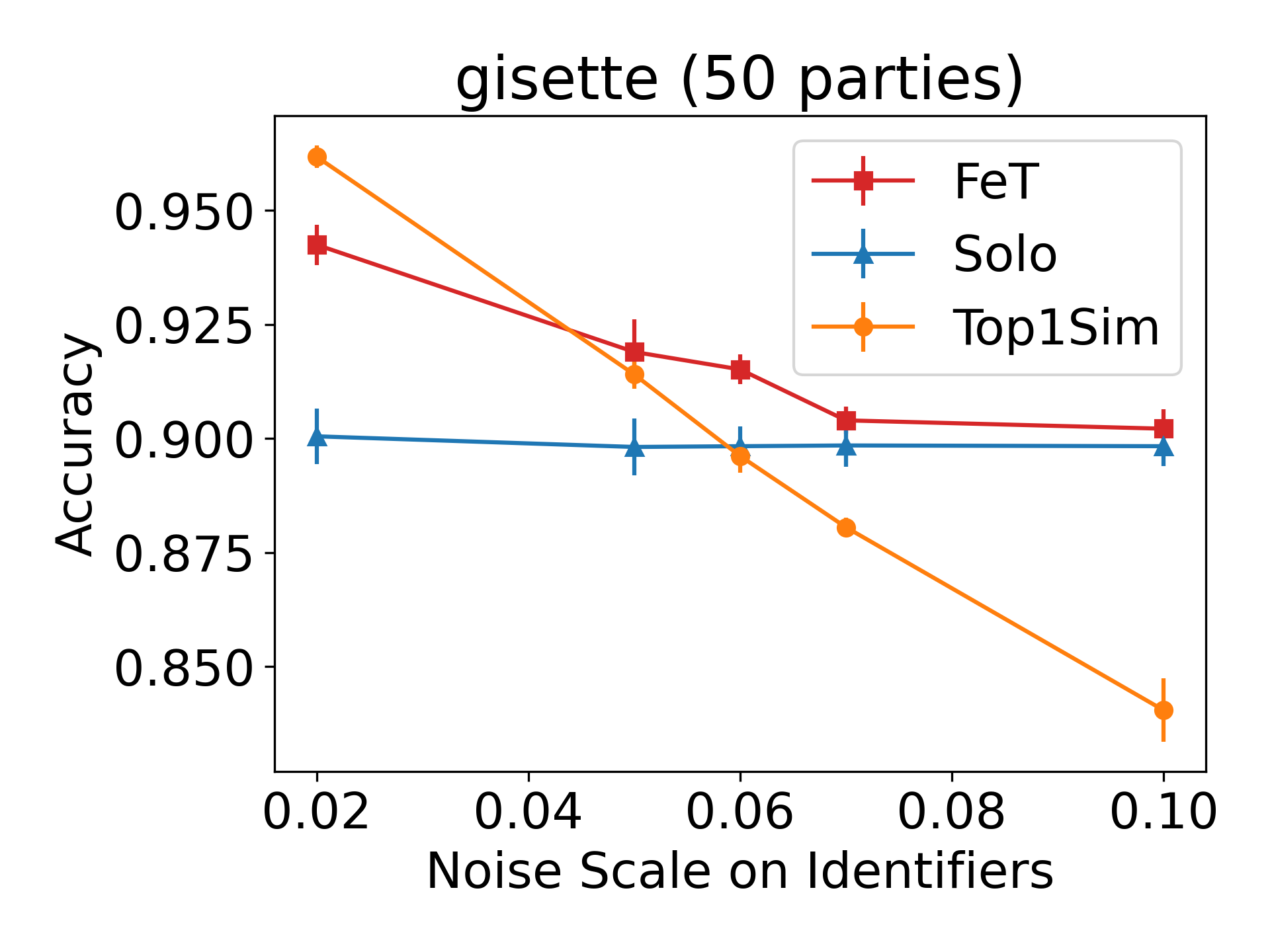}
    \includegraphics[width=0.4\textwidth]{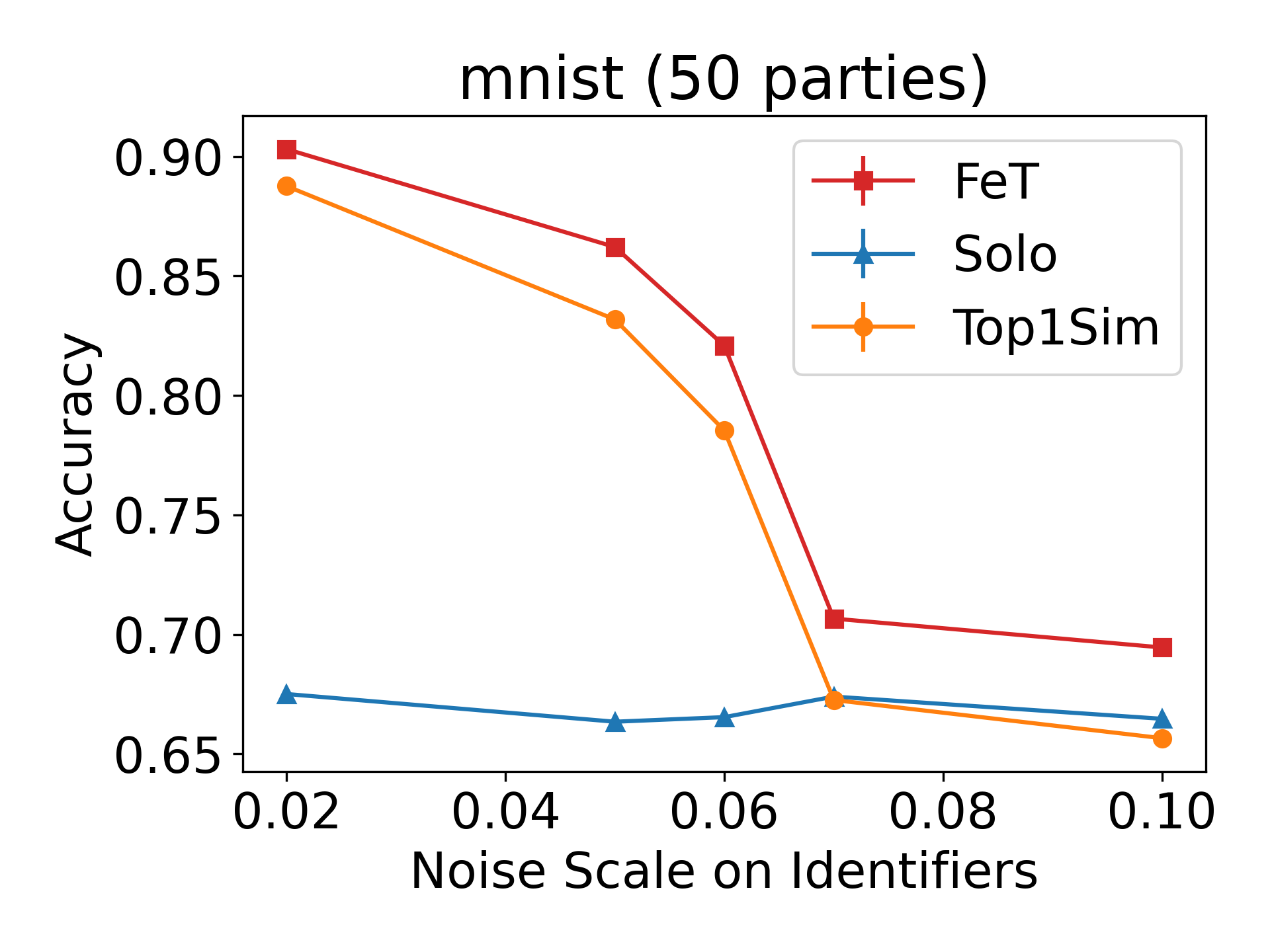}
    \caption{Effect of Fuzziness of Identifiers. The x-axis is the scale Gaussian noise added to precisely matched identifiers. } 
    \label{fig:ablation-noise-scale}
\end{figure}

\subsection{SplitAvg vs. SplitNN} 

To evaluate the comparative performance of the SplitAvg (without noise) and SplitNN, we conducted training for both models using identical hyperparameters on the same VFL dataset, \texttt{gisette}. The outcomes are illustrated in Figure~\ref{fig:gisette-concat-vs-sum}. Analysis of the figure yields two primary observations. Firstly, SplitNN and SplitAvg exhibit very similar loss and accuracy curves during training, indicating that both models behave very similarly. Secondly, upon expanding the number of participating parties to 128, we observe that the performance curves of both models remain closely aligned, albeit with the split-sum model exhibiting a marginally lower accuracy. This minor discrepancy is attributed to the increased model parameters in SplitNN, which can typically be compensated for by increasing the number of parameters in SplitAvg.

\begin{figure}[htpb]
    \centering
    \includegraphics[width=0.49\textwidth]{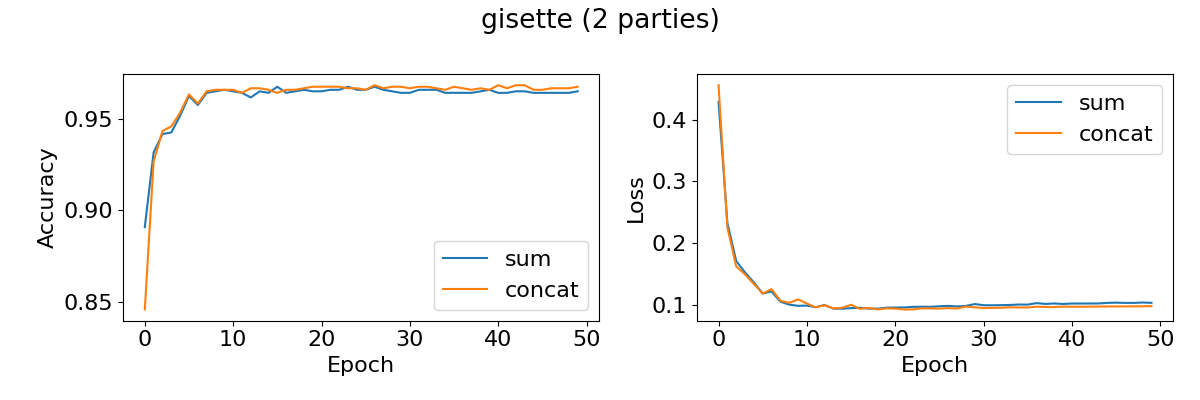}
    \includegraphics[width=0.49\textwidth]{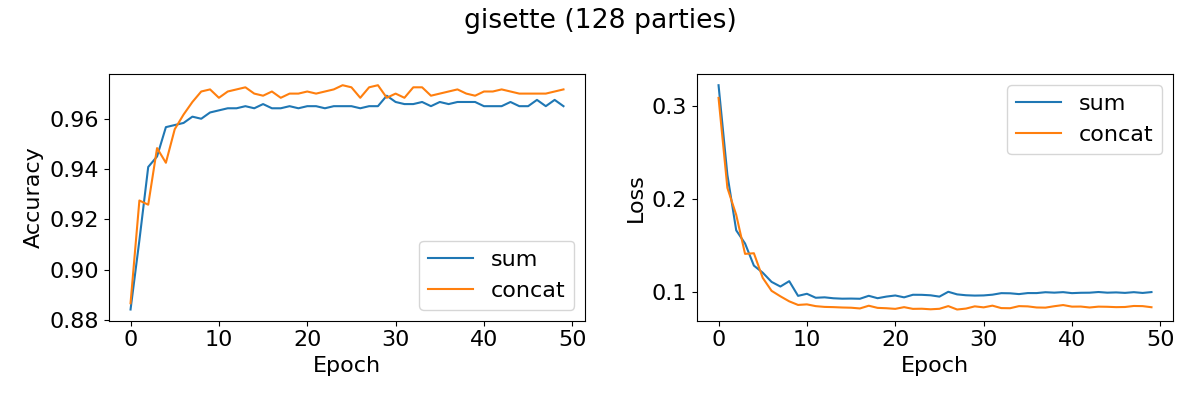}
    \caption{Test loss and accuracy curve of SplitAvg and SplitNN under same hyperparameters}
    \label{fig:gisette-concat-vs-sum}
\end{figure}

\section{Exact Linkage}\label{sec:exact-link}
While FeT primarily focuses on scenarios involving fuzzy linkage, we also evaluate its robustness in exact linkage contexts. To achieve this, we synthesize exact linkage data by generating pure random keys within the range of $[-1,1]$ without introducing any noise. Each party is randomly divided into five or ten groups, with each group containing an equal number of features. Importantly, each party retains the exact keys, ensuring a controlled environment for our evaluation.

The results of our experiments are summarized in Table~\ref{tab:exact-linkage}. From the table, we observe that Top1Sim achieves the highest accuracy, as it is inherently well-suited for exact linkage scenarios. In contrast, the accuracy of FeT shows a slight decrease, which may be attributed to overfitting; however, its performance remains competitive and does not suffer significantly in this context.

\begin{table*}[htpb]
\centering
    \caption{Performance of FeT under Exact Linkage}\label{tab:exact-linkage}
\begin{tabular}{lccc}
\toprule
\multicolumn{1}{c}{\multirow{2}{*}{\textbf{Dataset}}} & \multicolumn{3}{c}{\textbf{Algorithms}}          \\ \cmidrule{2-4} 
\multicolumn{1}{c}{}                                  & Solo        & Top1Sim              & FeT (ours)  \\ \midrule
gisette (5-party)                                     & 96.58\% ± 0.25\% & \textbf{97.52\% ± 0.26\%} & 94.73\% ± 0.60\% \\
MNIST (5-party)                                       & 95.96\% ± 0.03\% & \textbf{96.96\% ± 0.09\%} & 96.53\% ± 0.38\% \\
gisette (10-party)                                    & 96.30\% ± 0.25\% & \textbf{97.57\% ± 0.25\%} & 94.58\% ± 0.41\% \\
MNIST (10-party)                                      & 92.09\% ± 0.11\% & 96.95\% ± 0.06\% & \textbf{96.97\% ± 0.08\%} \\

\bottomrule
\end{tabular}

\end{table*}

\section{Efficiency}\label{sec:efficiency}
\paragraph{Parameter Efficiency.}
In our analysis, we assess the computational efficiency of standard addition compared to multi-party computation (MPC) addition, as shown in Table \ref{tbl:mpc}. Under the arithmetic GMW protocol \cite{gmw}, and given that the size of the aggregated vector varies by dataset, we use a typical size for our experiments. Specifically, we conduct MPC addition to aggregate 10,000-dimensional vectors from multiple parties. Each experiment is performed five times, with the average timing reported. Although MPC generally incurs higher computational requirements, the results in Table \ref{tbl:mpc} indicate that aggregating high-dimensional vectors via MPC incurs only a one-second overhead, even as the number of parties increases to 100. This minimal time cost is relatively small, especially when compared to other factors such as communication costs. Therefore, our findings suggest that MPC remains a feasible and efficient approach for representation aggregation in the context of VFL.

\begin{table}[ht]
\centering
\caption{Running time of summation with and without MPC in seconds}
\label{tbl:mpc}
\begin{tabular}{cccc}
\toprule
\textbf{\#parties} & \textbf{Sum} & \textbf{MPC Sum} & \textbf{Overhead} \\ \midrule
2 & $5.29\times10^{-6}$ & $5.09\times10^{-4}$ & $5.04\times10^{-4}$ \\ 
5 & $2.10\times10^{-5}$ & $2.75\times10^{-3}$ & $2.73\times10^{-3}$ \\ 
10 & $4.46\times10^{-5}$ & $1.02\times10^{-2}$ & $1.01\times10^{-2}$ \\ 
20 & $1.29\times10^{-4}$ & $4.36\times10^{-2}$ & $4.36\times10^{-2}$ \\ 
50 & $2.64\times10^{-4}$ & $0.268$ & $0.268$ \\ 
100 & $5.21\times10^{-4}$ & $1.06$ & $1.06$ \\ \bottomrule
\end{tabular}
\end{table}

\paragraph{Training Computational and Memory Efficiency.}
We evaluate the computational and memory efficiency of FeT during training on an RTX3090 GPU with a batch size of 128. The results, shown in Table~\ref{tab:fet-efficiency}, lead to three key observations: (1) FeT has a comparable number of parameters to FedSim; (2) FeT demonstrates improved memory efficiency compared to FedSim, although this improvement comes with a trade-off in training speed; and (3) the additional components, such as dynamic masking (DM) and positional encoding (PE), introduce only a minor overhead in terms of both parameters and computational cost.

\begin{table}[htpb]
        \centering
        \caption{Training efficiency of FeT on RTX3090. PE: positional encoding; DM: dynamic masking.}
        \label{tab:fet-efficiency}
        \begin{tabular}{cccccccccc}
    \toprule
    \multirow{2}{*}{\textbf{Dataset}} & \multicolumn{3}{c}{\textbf{\#parameters}} & \multicolumn{3}{c}{\textbf{Train Seconds / epoch}} & \multicolumn{3}{c}{\textbf{Peak GPU Memory (MB)}} \\ \cmidrule(l){2-10} 
                             & house      & bike      & hdb     & house         & bike         & hdb         & house      & bike     & hdb     \\ \midrule
    FedSim                   & 3.47M     &  1.85M    &  1.87M  &     9      &       38     &     6      &    2016    &     1917  &  1930   \\
    
    FeT w/o PE & 0.98M & 3.24M & 0.63M & 35 & 50 & 15 & 397 & 691 & 539  \\
    FeT w/o DM & 0.98M & 2.89M & 0.51M & 37 & 54 & 17 & 401 & 721 & 569 \\
    \textbf{FeT}                      & 0.98M      & 3.29M     &  0.63M   &   37        &     55       &     17       &   401      &    746    &  571  \\ \bottomrule
    \end{tabular}
    \end{table}

\section{Privacy on Two-Party Real Datasets}\label{sec:real-priv}

In this section, we explore how the performance of FeT varies with different noise scales \(\sigma\) and secondary sampling rate, illustrating the influence of privacy constraints on its accuracy. The outcomes are depicted in Figure~\ref{fig:fet-dp-acc-sampled}. From this figure, we observe two key points. First, for large secondary datasets like \texttt{bike}, a moderate sampling rate has a negligible effect on model performance. Conversely, for smaller secondary datasets like \texttt{hdb}, performance is quite sensitive to sampling rates. Second, as the noise scale increases for secondary parties, the performance of FeT does not degrade sharply; instead, it gradually converges to a state where only primary features are utilized due to our dynamic masking design. In this scenario, FeT also outperforms MLP-based Solo primarily due to the transformer's key encoding, which has proven to be more effective than incorporating all keys into the training process, as evidenced in spatial-temporal prediction tasks \cite{cong2021spatial}.

\begin{figure}[htpb]
    \centering
    \includegraphics[width=0.32\linewidth]{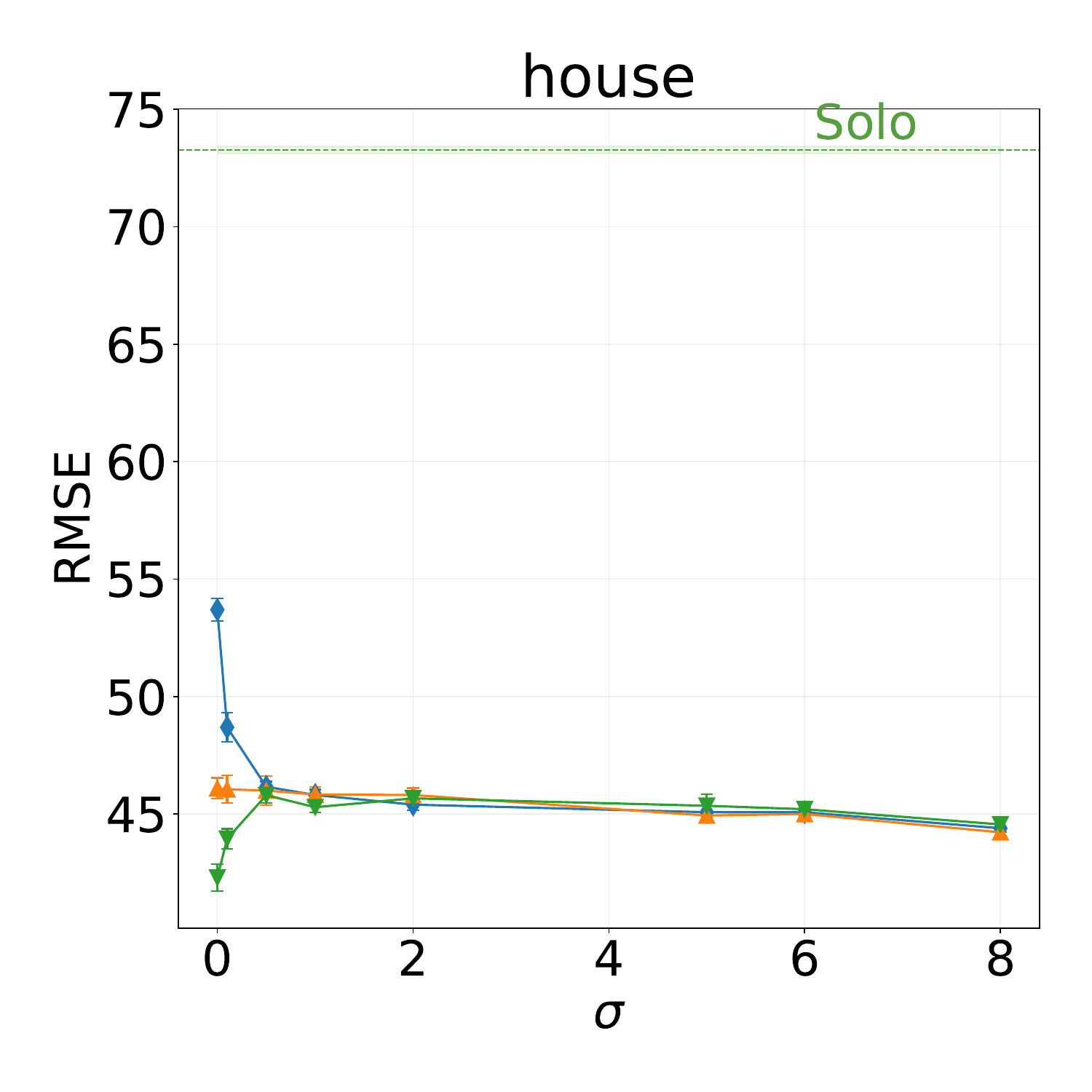}
    \includegraphics[width=0.32\linewidth]{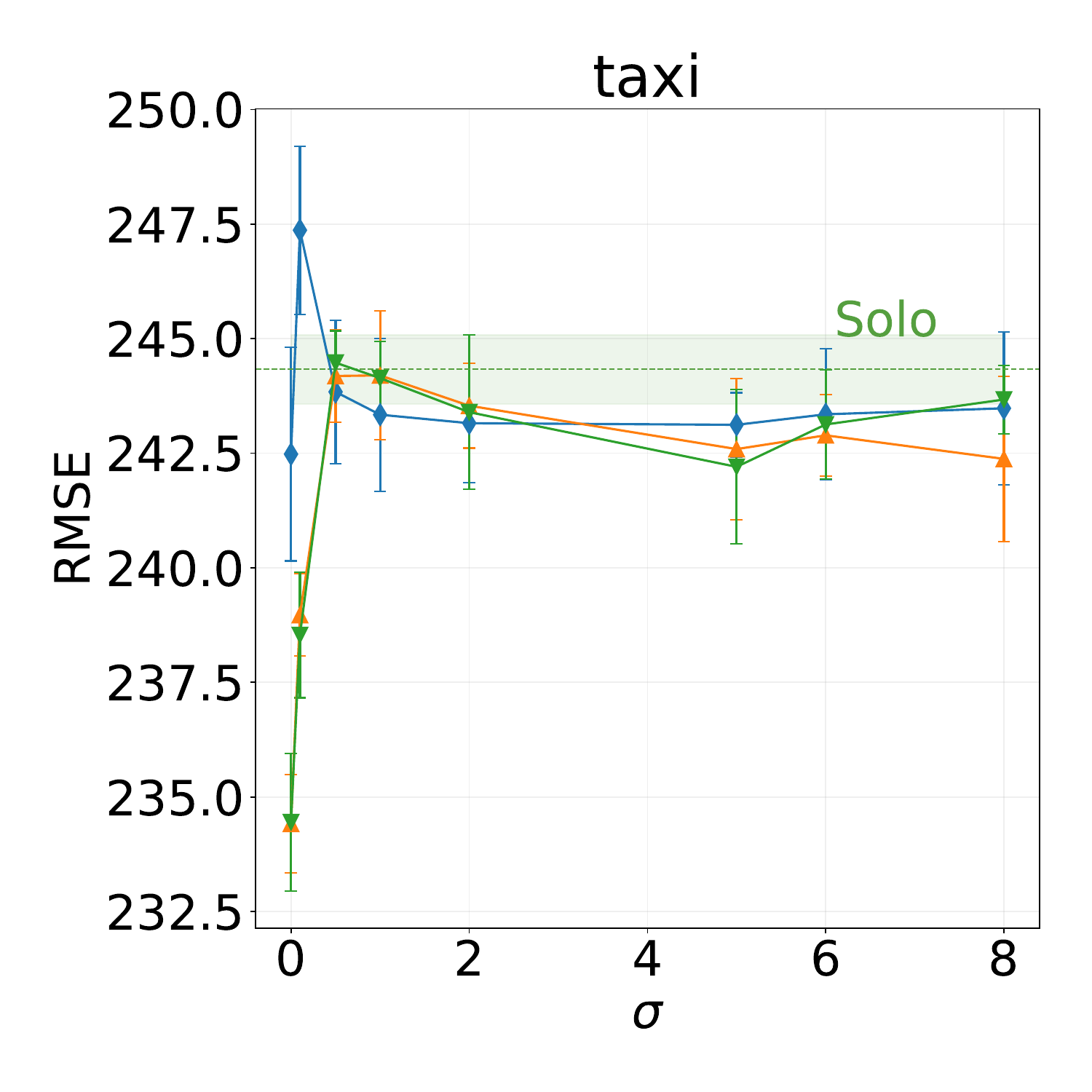}
    \includegraphics[width=0.32\linewidth]{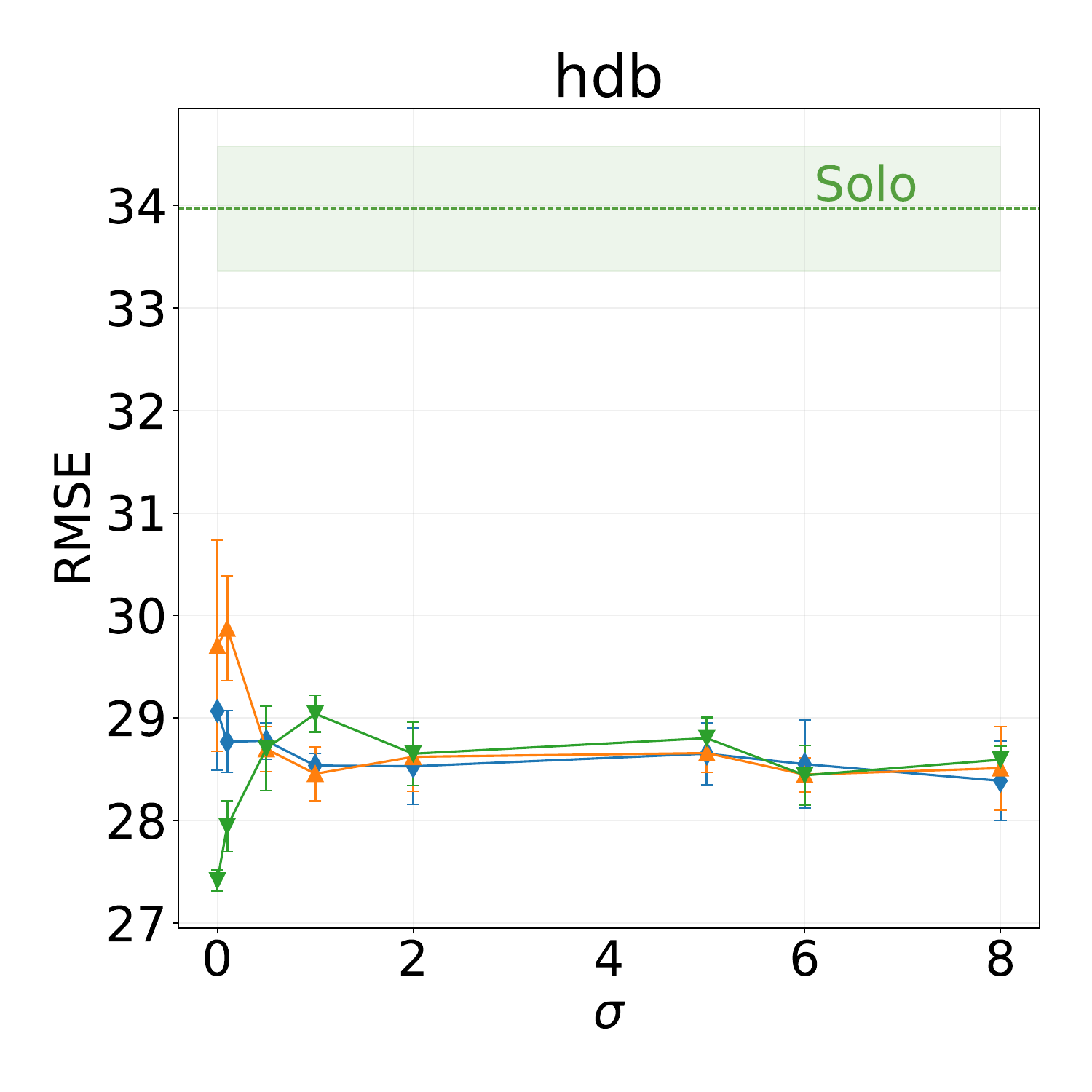}\\
    \includegraphics[width=0.99\linewidth]{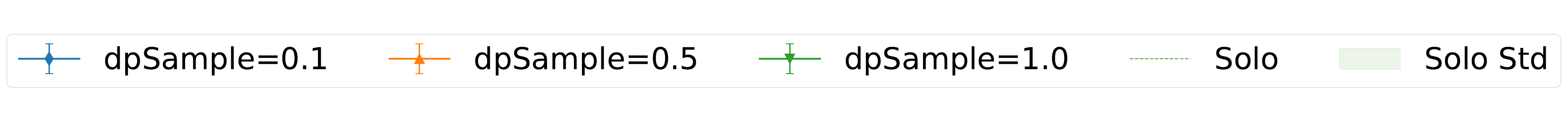}
    \caption{Impact of noise scale $\sigma$ on FeT performance}
    \label{fig:fet-dp-acc-sampled}
\end{figure}

Next, we explore the relationship between $\sigma$ and $\varepsilon$ as outlined in Theorem~\ref{thm:fet-dp}, setting hyperparameters to reflect typical training conditions. The number of epochs is chosen based on common convergence epochs: 10 for \texttt{bike}, and 50 for \texttt{house} and \texttt{hdb}. We adopt a batch size of 8k and set $\delta$ to $1/N$, with $N$ representing the size of party $S_1$. This correlation between $\epsilon$ and $\sigma$ is depicted in Figure~\ref{fig:fet-dp-eps-sigma}. The figure illustrates that reasonable noise levels can yield robust privacy guarantees. For instance, within a noise scale conducive to maintaining competitive performance, FeT achieves $\varepsilon=3$ for \texttt{hdb} and $\varepsilon=5$ for \texttt{house}, indicating effective privacy preservation under practical noise conditions.

\begin{figure}[htpb]
    \centering
    \includegraphics[width=0.32\linewidth]{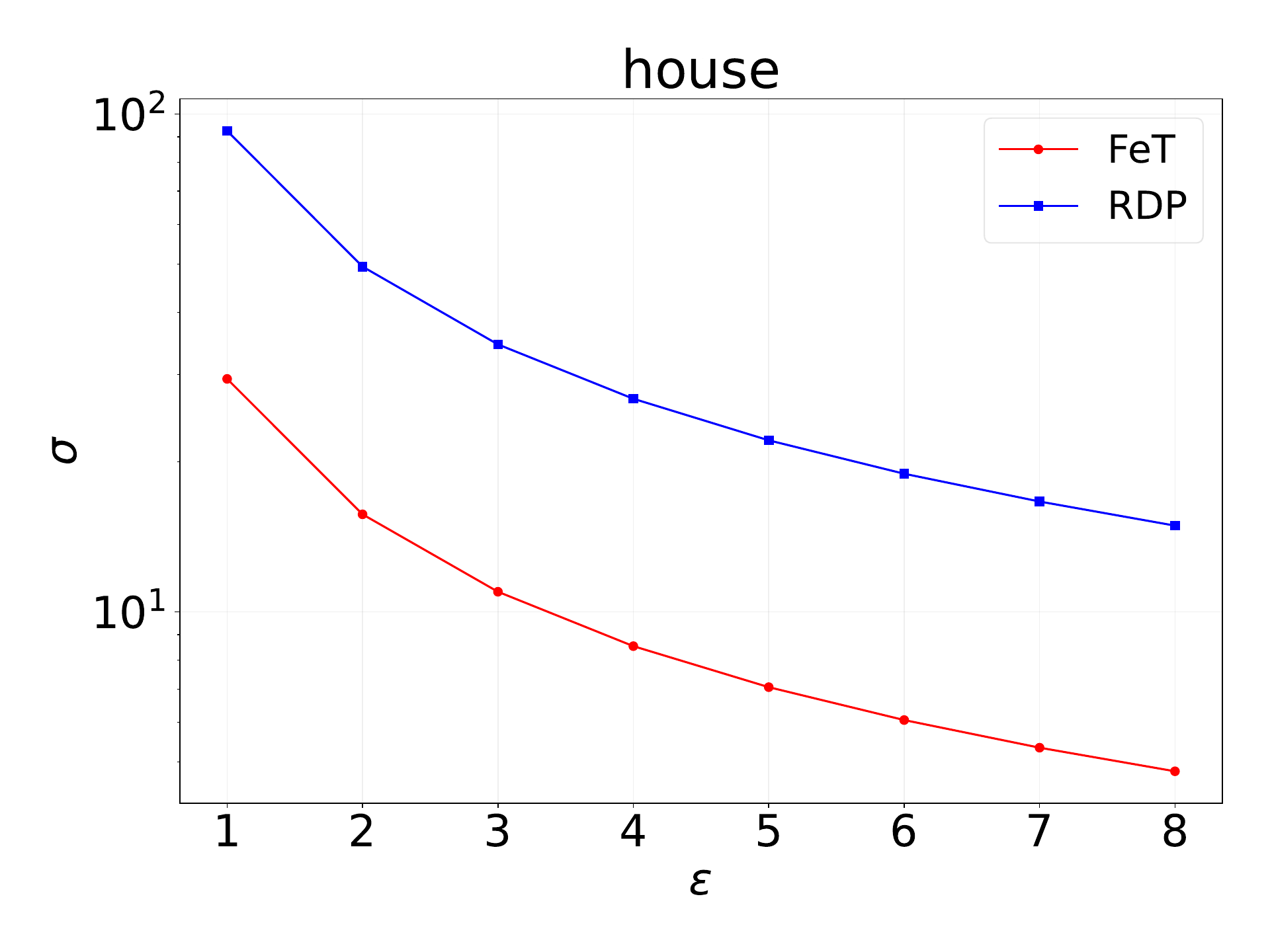}
    \includegraphics[width=0.32\linewidth]{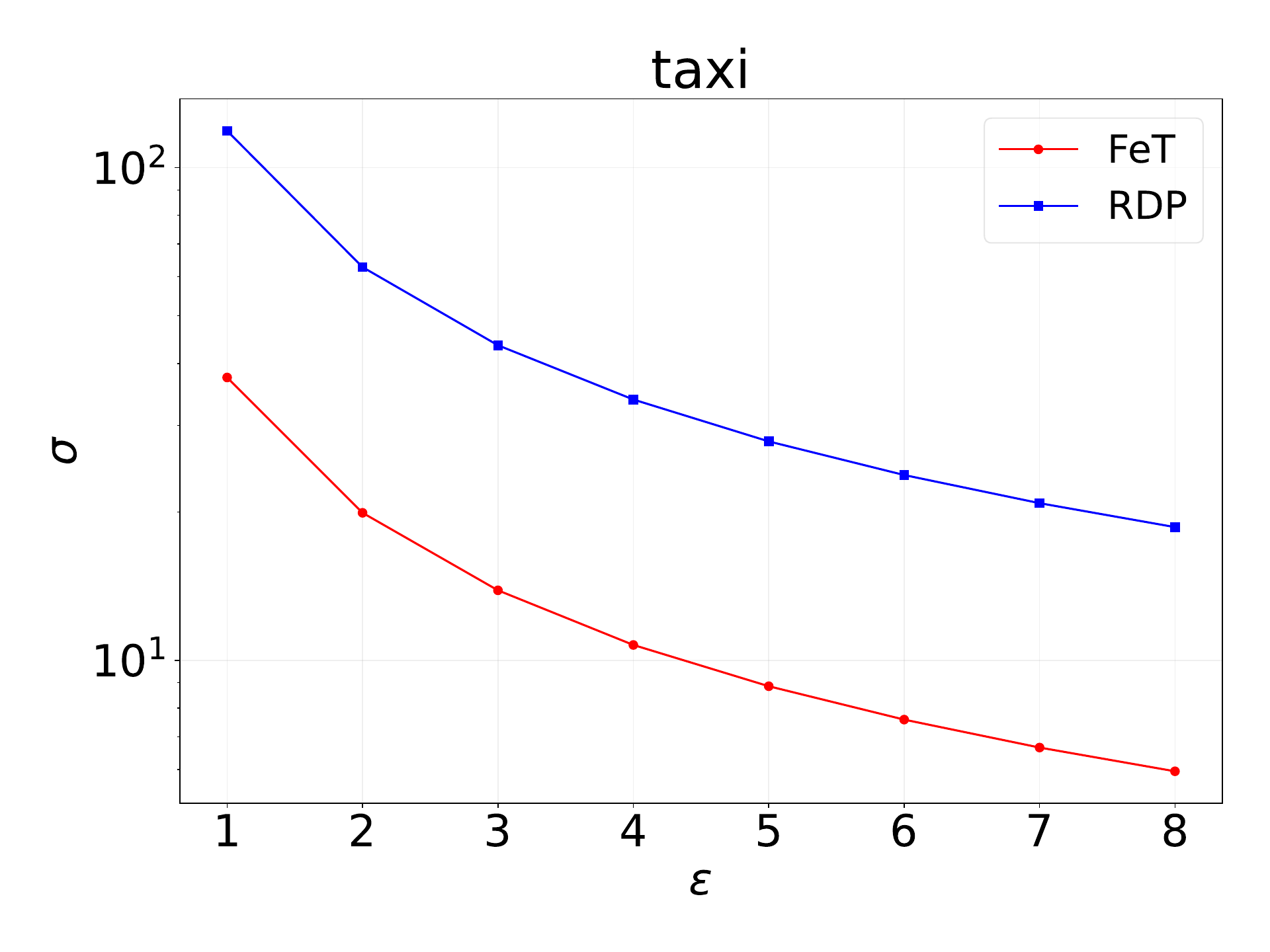}
    \includegraphics[width=0.32\linewidth]{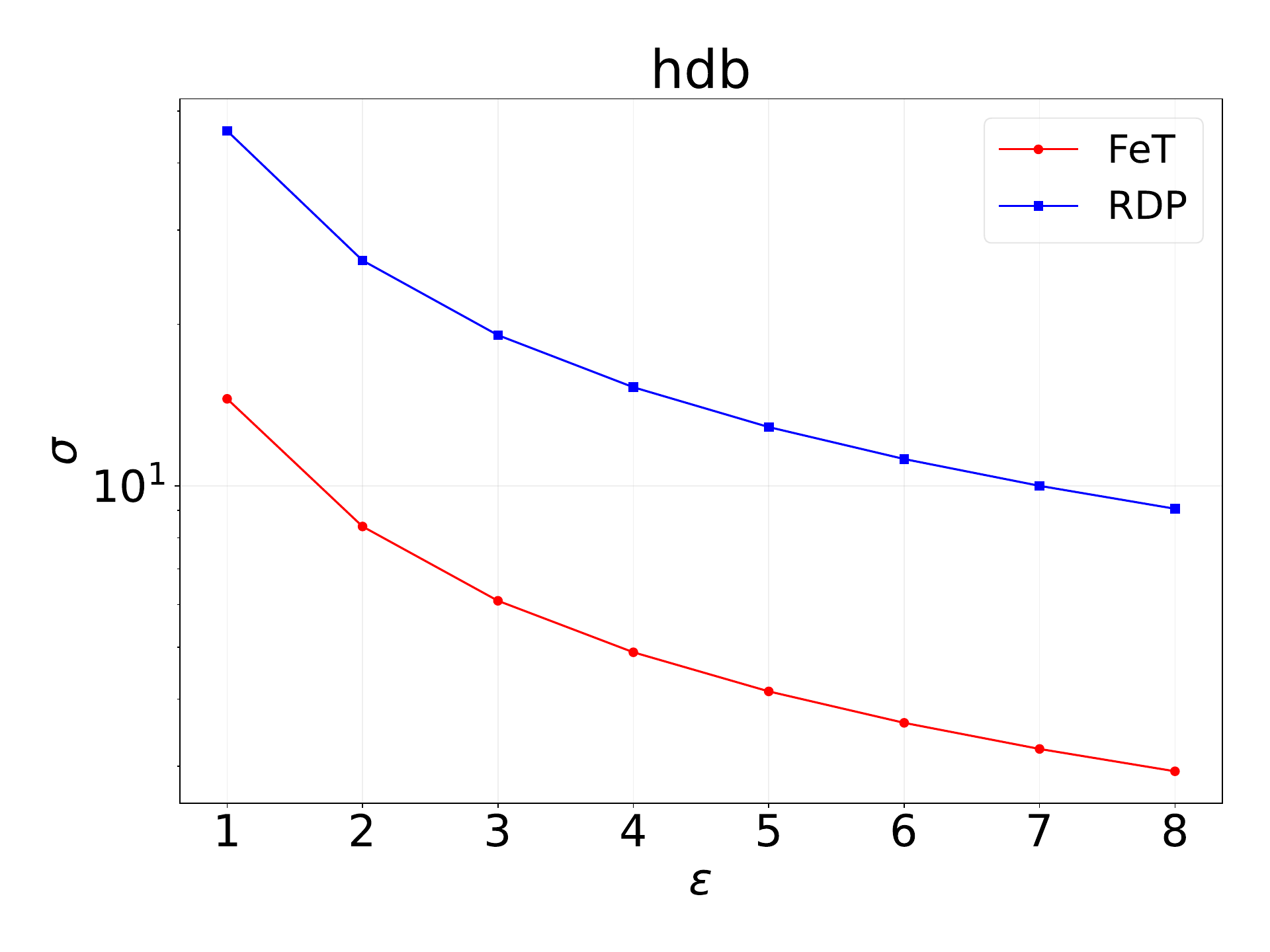}
    \caption{Relationship between $\varepsilon$ and noise $\sigma$}
    \label{fig:fet-dp-eps-sigma}
\end{figure}

\section{Performance on Imbalanced Split}\label{sec:imbalance}
The preceding experiments were conducted using a balanced feature split for VFL. Building on this foundation, we extended our evaluation of FeT to include datasets with varying levels of imbalance, motivated by the recent benchmarks presented in VertiBench~\cite{wu2024vertibench}. The \texttt{MNIST} datasets are divided by features according to the methodology described in VertiBench~\cite{wu2024vertibench}, utilizing imbalance parameters \( \alpha \in \{0.1, 0.5, 1.0, 5.0, 10.0, 50.0\} \), where a higher \( \alpha \) value denotes greater balance across parties. The findings, illustrated in Figure~\ref{fig:imb-split}, lead to two key observations: firstly, both FeT and the baseline algorithms exhibit improved performance in more balanced scenarios. Secondly, despite the varying levels of data imbalance, FeT consistently shows competitive or superior performance relative to the baselines.

\begin{figure}[htpb]
    \centering
    \includegraphics[width=0.4\textwidth]{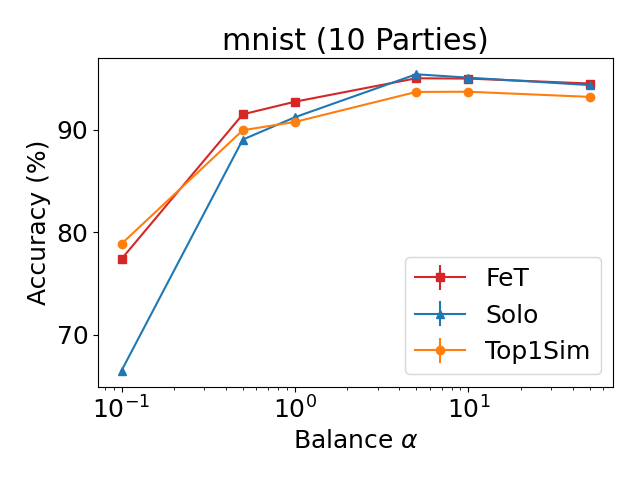}
    \includegraphics[width=0.4\textwidth]{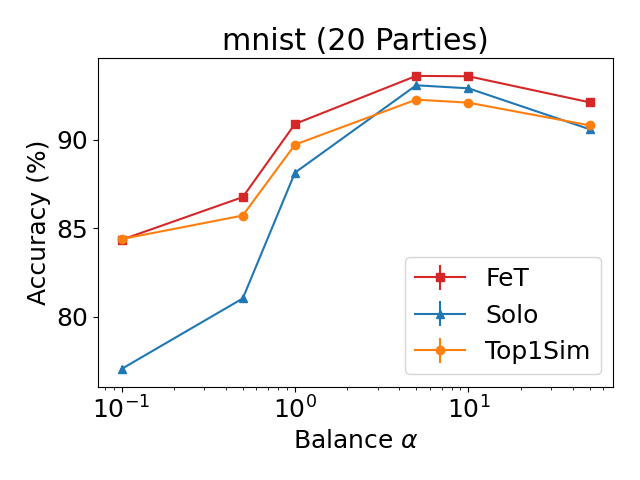}
    \caption{Performance on feature split with different level of imbalance}
    \label{fig:imb-split}
\end{figure}

\section{Limitations}\label{sec:limitation}
The design of FeT includes three primary limitations that warrant careful consideration. First, FeT operates under the assumption that common features exist across all parties. While this assumption is valid in many scenarios, it may not hold in more complex situations where the parties lack a shared set of features. This limitation necessitates further investigation into alternative frameworks or adaptations that can accommodate such cases, particularly in heterogeneous environments.

Second, although FeT facilitates the application of scalable differential privacy across multiple parties, the stringent privacy safeguards can lead to significant accuracy reductions when operating with low values of \(\varepsilon\). This trade-off between privacy and utility is particularly concerning in performance-sensitive applications, where quantifying the extent of accuracy loss is essential for informing users about the potential impacts on their results. Future work should explore methods to balance privacy and accuracy more effectively.

Third, similar to other fuzzy VFL methods~\cite{wu2022coupled}, FeT assumes a correlation between identifiers and data representations. This assumption may not hold in cases where identifiers are randomly generated, which could lead to overfitting and minor performance deficits compared to Top1 approaches. Experiments on such datasets (Appendix~\ref{sec:exact-link}) indicate that while FeT performs well in many scenarios, its effectiveness may vary significantly depending on the nature of the data and the key generation process. Therefore, further empirical studies are needed to assess FeT's robustness across diverse datasets and identifier generation strategies.

\section{License}\label{sec:lincense}
The licenses of the datasets used in this work are presented in Table~\ref{tab:licenses-datasets}. We utilize the code from FedSim~\cite{wu2022coupled} as our baseline, which is licensed under the Apache V2 license\footnote{\url{https://www.apache.org/licenses/LICENSE-2.0}}. Our own code will also be open-sourced under the Apache V2 license.

\begin{table}[htbp]
	\centering
	\small
	\caption{Licenses of datasets}\label{tab:licenses-datasets}
		    
	\begin{tabular}{@{}lllll@{}}
		\toprule
		\textbf{Dataset}      & \textbf{License}                               & \textbf{Adapt} & \textbf{Share} & \textbf{Commercial} \\ \midrule
		\cite{house, school} & CC0 1.0\textsuperscript{a}                     & \cmark         & \cmark         & \cmark              \\
		\cite{bike}         & NYCBS Data Use Policy\textsuperscript{b}       & \cmark         & \cmark         & \cmark              \\
		\cite{airbnb}        & CC BY-NC-SA 4.0\textsuperscript{c}             & \cmark         & \cmark         & \xmark              \\
		\cite{hdb}          & Singapore Open Data License\textsuperscript{d} & \cmark         & \cmark         & \xmark              \\ 
		\cite{taxi}         & All rights reserved                            & \xmark         & \xmark         & \xmark              \\ \bottomrule
	\end{tabular}

	\begin{tablenotes}
		\item \textsuperscript{a} \url{https://creativecommons.org/publicdomain/zero/1.0/}
		\item \textsuperscript{b} \url{https://ride.citibikenyc.com/data-sharing-policy}
		\item \textsuperscript{c} \url{https://creativecommons.org/licenses/by/4.0/}
		\item \textsuperscript{d} \url{https://beta.data.gov.sg/open-data-license}   
	\end{tablenotes}
	 
\end{table}

\newpage
\section*{NeurIPS Paper Checklist}

\begin{enumerate}

\item {\bf Claims}
    \item[] Question: Do the main claims made in the abstract and introduction accurately reflect the paper's contributions and scope?
    \item[] Answer: \answerYes{} %
    \item[] Justification: We have stated in abstract and introduction that the paper proposes a novel FeT framework to address the scalability and privacy issues in Vertical Federated Learning (VFL).
    \item[] Guidelines:
    \begin{itemize}
        \item The answer NA means that the abstract and introduction do not include the claims made in the paper.
        \item The abstract and/or introduction should clearly state the claims made, including the contributions made in the paper and important assumptions and limitations. A No or NA answer to this question will not be perceived well by the reviewers. 
        \item The claims made should match theoretical and experimental results, and reflect how much the results can be expected to generalize to other settings. 
        \item It is fine to include aspirational goals as motivation as long as it is clear that these goals are not attained by the paper. 
    \end{itemize}

\item {\bf Limitations}
    \item[] Question: Does the paper discuss the limitations of the work performed by the authors?
    \item[] Answer: \answerYes{} %
    \item[] Justification: See Section~\ref{sec:conclusion}.
    \item[] Guidelines:
    \begin{itemize}
        \item The answer NA means that the paper has no limitation while the answer No means that the paper has limitations, but those are not discussed in the paper. 
        \item The authors are encouraged to create a separate "Limitations" section in their paper.
        \item The paper should point out any strong assumptions and how robust the results are to violations of these assumptions (e.g., independence assumptions, noiseless settings, model well-specification, asymptotic approximations only holding locally). The authors should reflect on how these assumptions might be violated in practice and what the implications would be.
        \item The authors should reflect on the scope of the claims made, e.g., if the approach was only tested on a few datasets or with a few runs. In general, empirical results often depend on implicit assumptions, which should be articulated.
        \item The authors should reflect on the factors that influence the performance of the approach. For example, a facial recognition algorithm may perform poorly when image resolution is low or images are taken in low lighting. Or a speech-to-text system might not be used reliably to provide closed captions for online lectures because it fails to handle technical jargon.
        \item The authors should discuss the computational efficiency of the proposed algorithms and how they scale with dataset size.
        \item If applicable, the authors should discuss possible limitations of their approach to address problems of privacy and fairness.
        \item While the authors might fear that complete honesty about limitations might be used by reviewers as grounds for rejection, a worse outcome might be that reviewers discover limitations that aren't acknowledged in the paper. The authors should use their best judgment and recognize that individual actions in favor of transparency play an important role in developing norms that preserve the integrity of the community. Reviewers will be specifically instructed to not penalize honesty concerning limitations.
    \end{itemize}

\item {\bf Theory Assumptions and Proofs}
    \item[] Question: For each theoretical result, does the paper provide the full set of assumptions and a complete (and correct) proof?
    \item[] Answer: \answerYes{} %
    \item[] Justification: See Section~\ref{sec:privacy} and Appendix \ref{sec:proof}.
    \item[] Guidelines:
    \begin{itemize}
        \item The answer NA means that the paper does not include theoretical results. 
        \item All the theorems, formulas, and proofs in the paper should be numbered and cross-referenced.
        \item All assumptions should be clearly stated or referenced in the statement of any theorems.
        \item The proofs can either appear in the main paper or the supplemental material, but if they appear in the supplemental material, the authors are encouraged to provide a short proof sketch to provide intuition. 
        \item Inversely, any informal proof provided in the core of the paper should be complemented by formal proofs provided in appendix or supplemental material.
        \item Theorems and Lemmas that the proof relies upon should be properly referenced. 
    \end{itemize}

    \item {\bf Experimental Result Reproducibility}
    \item[] Question: Does the paper fully disclose all the information needed to reproduce the main experimental results of the paper to the extent that it affects the main claims and/or conclusions of the paper (regardless of whether the code and data are provided or not)?
    \item[] Answer: \answerYes{} %
    \item[] Justification: See Section \ref{exp:settings} and Appendix~\ref{sec:exp-detail}.
    \item[] Guidelines:
    \begin{itemize}
        \item The answer NA means that the paper does not include experiments.
        \item If the paper includes experiments, a No answer to this question will not be perceived well by the reviewers: Making the paper reproducible is important, regardless of whether the code and data are provided or not.
        \item If the contribution is a dataset and/or model, the authors should describe the steps taken to make their results reproducible or verifiable. 
        \item Depending on the contribution, reproducibility can be accomplished in various ways. For example, if the contribution is a novel architecture, describing the architecture fully might suffice, or if the contribution is a specific model and empirical evaluation, it may be necessary to either make it possible for others to replicate the model with the same dataset, or provide access to the model. In general. releasing code and data is often one good way to accomplish this, but reproducibility can also be provided via detailed instructions for how to replicate the results, access to a hosted model (e.g., in the case of a large language model), releasing of a model checkpoint, or other means that are appropriate to the research performed.
        \item While NeurIPS does not require releasing code, the conference does require all submissions to provide some reasonable avenue for reproducibility, which may depend on the nature of the contribution. For example
        \begin{enumerate}
            \item If the contribution is primarily a new algorithm, the paper should make it clear how to reproduce that algorithm.
            \item If the contribution is primarily a new model architecture, the paper should describe the architecture clearly and fully.
            \item If the contribution is a new model (e.g., a large language model), then there should either be a way to access this model for reproducing the results or a way to reproduce the model (e.g., with an open-source dataset or instructions for how to construct the dataset).
            \item We recognize that reproducibility may be tricky in some cases, in which case authors are welcome to describe the particular way they provide for reproducibility. In the case of closed-source models, it may be that access to the model is limited in some way (e.g., to registered users), but it should be possible for other researchers to have some path to reproducing or verifying the results.
        \end{enumerate}
    \end{itemize}

\item {\bf Open access to data and code}
    \item[] Question: Does the paper provide open access to the data and code, with sufficient instructions to faithfully reproduce the main experimental results, as described in supplemental material?
    \item[] Answer: \answerYes{} %
    \item[] Justification: The codes are available at a GitHub repository \url{https://github.com/Xtra-Computing/FeT}.
    \item[] Guidelines:
    \begin{itemize}
        \item The answer NA means that paper does not include experiments requiring code.
        \item Please see the NeurIPS code and data submission guidelines (\url{https://nips.cc/public/guides/CodeSubmissionPolicy}) for more details.
        \item While we encourage the release of code and data, we understand that this might not be possible, so “No” is an acceptable answer. Papers cannot be rejected simply for not including code, unless this is central to the contribution (e.g., for a new open-source benchmark).
        \item The instructions should contain the exact command and environment needed to run to reproduce the results. See the NeurIPS code and data submission guidelines (\url{https://nips.cc/public/guides/CodeSubmissionPolicy}) for more details.
        \item The authors should provide instructions on data access and preparation, including how to access the raw data, preprocessed data, intermediate data, and generated data, etc.
        \item The authors should provide scripts to reproduce all experimental results for the new proposed method and baselines. If only a subset of experiments are reproducible, they should state which ones are omitted from the script and why.
        \item At submission time, to preserve anonymity, the authors should release anonymized versions (if applicable).
        \item Providing as much information as possible in supplemental material (appended to the paper) is recommended, but including URLs to data and code is permitted.
    \end{itemize}

\item {\bf Experimental Setting/Details}
    \item[] Question: Does the paper specify all the training and test details (e.g., data splits, hyperparameters, how they were chosen, type of optimizer, etc.) necessary to understand the results?
    \item[] Answer: \answerYes{} %
    \item[] Justification: See Section \ref{exp:settings}.
    \item[] Guidelines:
    \begin{itemize}
        \item The answer NA means that the paper does not include experiments.
        \item The experimental setting should be presented in the core of the paper to a level of detail that is necessary to appreciate the results and make sense of them.
        \item The full details can be provided either with the code, in appendix, or as supplemental material.
    \end{itemize}

\item {\bf Experiment Statistical Significance}
    \item[] Question: Does the paper report error bars suitably and correctly defined or other appropriate information about the statistical significance of the experiments?
    \item[] Answer: \answerYes{} %
    \item[] Justification: We repeat the experiments with five different random seeds.
    \item[] Guidelines:
    \begin{itemize}
        \item The answer NA means that the paper does not include experiments.
        \item The authors should answer "Yes" if the results are accompanied by error bars, confidence intervals, or statistical significance tests, at least for the experiments that support the main claims of the paper.
        \item The factors of variability that the error bars are capturing should be clearly stated (for example, train/test split, initialization, random drawing of some parameter, or overall run with given experimental conditions).
        \item The method for calculating the error bars should be explained (closed form formula, call to a library function, bootstrap, etc.)
        \item The assumptions made should be given (e.g., Normally distributed errors).
        \item It should be clear whether the error bar is the standard deviation or the standard error of the mean.
        \item It is OK to report 1-sigma error bars, but one should state it. The authors should preferably report a 2-sigma error bar than state that they have a 96\% CI, if the hypothesis of Normality of errors is not verified.
        \item For asymmetric distributions, the authors should be careful not to show in tables or figures symmetric error bars that would yield results that are out of range (e.g. negative error rates).
        \item If error bars are reported in tables or plots, The authors should explain in the text how they were calculated and reference the corresponding figures or tables in the text.
    \end{itemize}

\item {\bf Experiments Compute Resources}
    \item[] Question: For each experiment, does the paper provide sufficient information on the computer resources (type of compute workers, memory, time of execution) needed to reproduce the experiments?
    \item[] Answer: \answerYes{} %
    \item[] Justification: See the paragraph on environments in Section \ref{exp:settings}.
    \item[] Guidelines:
    \begin{itemize}
        \item The answer NA means that the paper does not include experiments.
        \item The paper should indicate the type of compute workers CPU or GPU, internal cluster, or cloud provider, including relevant memory and storage.
        \item The paper should provide the amount of compute required for each of the individual experimental runs as well as estimate the total compute. 
        \item The paper should disclose whether the full research project required more compute than the experiments reported in the paper (e.g., preliminary or failed experiments that didn't make it into the paper). 
    \end{itemize}
    
\item {\bf Code Of Ethics}
    \item[] Question: Does the research conducted in the paper conform, in every respect, with the NeurIPS Code of Ethics \url{https://neurips.cc/public/EthicsGuidelines}?
    \item[] Answer: \answerYes{} %
    \item[] Justification: We follow the code of ethics.
    \item[] Guidelines:
    \begin{itemize}
        \item The answer NA means that the authors have not reviewed the NeurIPS Code of Ethics.
        \item If the authors answer No, they should explain the special circumstances that require a deviation from the Code of Ethics.
        \item The authors should make sure to preserve anonymity (e.g., if there is a special consideration due to laws or regulations in their jurisdiction).
    \end{itemize}

\item {\bf Broader Impacts}
    \item[] Question: Does the paper discuss both potential positive societal impacts and negative societal impacts of the work performed?
    \item[] Answer: \answerYes{} %
    \item[] Justification: See Section \ref{sec:conclusion}.
    \item[] Guidelines:
    \begin{itemize}
        \item The answer NA means that there is no societal impact of the work performed.
        \item If the authors answer NA or No, they should explain why their work has no societal impact or why the paper does not address societal impact.
        \item Examples of negative societal impacts include potential malicious or unintended uses (e.g., disinformation, generating fake profiles, surveillance), fairness considerations (e.g., deployment of technologies that could make decisions that unfairly impact specific groups), privacy considerations, and security considerations.
        \item The conference expects that many papers will be foundational research and not tied to particular applications, let alone deployments. However, if there is a direct path to any negative applications, the authors should point it out. For example, it is legitimate to point out that an improvement in the quality of generative models could be used to generate deepfakes for disinformation. On the other hand, it is not needed to point out that a generic algorithm for optimizing neural networks could enable people to train models that generate Deepfakes faster.
        \item The authors should consider possible harms that could arise when the technology is being used as intended and functioning correctly, harms that could arise when the technology is being used as intended but gives incorrect results, and harms following from (intentional or unintentional) misuse of the technology.
        \item If there are negative societal impacts, the authors could also discuss possible mitigation strategies (e.g., gated release of models, providing defenses in addition to attacks, mechanisms for monitoring misuse, mechanisms to monitor how a system learns from feedback over time, improving the efficiency and accessibility of ML).
    \end{itemize}
    
\item {\bf Safeguards}
    \item[] Question: Does the paper describe safeguards that have been put in place for responsible release of data or models that have a high risk for misuse (e.g., pretrained language models, image generators, or scraped datasets)?
    \item[] Answer: \answerNA{} %
    \item[] Justification: The paper poses no such risk.
    \item[] Guidelines:
    \begin{itemize}
        \item The answer NA means that the paper poses no such risks.
        \item Released models that have a high risk for misuse or dual-use should be released with necessary safeguards to allow for controlled use of the model, for example by requiring that users adhere to usage guidelines or restrictions to access the model or implementing safety filters. 
        \item Datasets that have been scraped from the Internet could pose safety risks. The authors should describe how they avoided releasing unsafe images.
        \item We recognize that providing effective safeguards is challenging, and many papers do not require this, but we encourage authors to take this into account and make a best faith effort.
    \end{itemize}

\item {\bf Licenses for existing assets}
    \item[] Question: Are the creators or original owners of assets (e.g., code, data, models), used in the paper, properly credited and are the license and terms of use explicitly mentioned and properly respected?
    \item[] Answer: \answerYes{} %
    \item[] Justification: We have cited the used datasets in Section \ref{exp:settings} and list the licenses in Appendix~\ref{sec:lincense}.
    \item[] Guidelines:
    \begin{itemize}
        \item The answer NA means that the paper does not use existing assets.
        \item The authors should cite the original paper that produced the code package or dataset.
        \item The authors should state which version of the asset is used and, if possible, include a URL.
        \item The name of the license (e.g., CC-BY 4.0) should be included for each asset.
        \item For scraped data from a particular source (e.g., website), the copyright and terms of service of that source should be provided.
        \item If assets are released, the license, copyright information, and terms of use in the package should be provided. For popular datasets, \url{paperswithcode.com/datasets} has curated licenses for some datasets. Their licensing guide can help determine the license of a dataset.
        \item For existing datasets that are re-packaged, both the original license and the license of the derived asset (if it has changed) should be provided.
        \item If this information is not available online, the authors are encouraged to reach out to the asset's creators.
    \end{itemize}

\item {\bf New Assets}
    \item[] Question: Are new assets introduced in the paper well documented and is the documentation provided alongside the assets?
    \item[] Answer: \answerYes{} %
    \item[] Justification: See Appendix \ref{sec:lincense}.
    \item[] Guidelines:
    \begin{itemize}
        \item The answer NA means that the paper does not release new assets.
        \item Researchers should communicate the details of the dataset/code/model as part of their submissions via structured templates. This includes details about training, license, limitations, etc. 
        \item The paper should discuss whether and how consent was obtained from people whose asset is used.
        \item At submission time, remember to anonymize your assets (if applicable). You can either create an anonymized URL or include an anonymized zip file.
    \end{itemize}

\item {\bf Crowdsourcing and Research with Human Subjects}
    \item[] Question: For crowdsourcing experiments and research with human subjects, does the paper include the full text of instructions given to participants and screenshots, if applicable, as well as details about compensation (if any)? 
    \item[] Answer: \answerNA{} %
    \item[] Justification: The paper does not involve crowdsourcing nor research with human subjects.
    \item[] Guidelines:
    \begin{itemize}
        \item The answer NA means that the paper does not involve crowdsourcing nor research with human subjects.
        \item Including this information in the supplemental material is fine, but if the main contribution of the paper involves human subjects, then as much detail as possible should be included in the main paper. 
        \item According to the NeurIPS Code of Ethics, workers involved in data collection, curation, or other labor should be paid at least the minimum wage in the country of the data collector. 
    \end{itemize}

\item {\bf Institutional Review Board (IRB) Approvals or Equivalent for Research with Human Subjects}
    \item[] Question: Does the paper describe potential risks incurred by study participants, whether such risks were disclosed to the subjects, and whether Institutional Review Board (IRB) approvals (or an equivalent approval/review based on the requirements of your country or institution) were obtained?
    \item[] Answer: \answerNA{} %
    \item[] Justification: The paper does not involve crowdsourcing nor research with human subjects.
    \item[] Guidelines:
    \begin{itemize}
        \item The answer NA means that the paper does not involve crowdsourcing nor research with human subjects.
        \item Depending on the country in which research is conducted, IRB approval (or equivalent) may be required for any human subjects research. If you obtained IRB approval, you should clearly state this in the paper. 
        \item We recognize that the procedures for this may vary significantly between institutions and locations, and we expect authors to adhere to the NeurIPS Code of Ethics and the guidelines for their institution. 
        \item For initial submissions, do not include any information that would break anonymity (if applicable), such as the institution conducting the review.
    \end{itemize}

\end{enumerate}

\end{document}